\documentclass[a4paper,UKenglish,cleveref, autoref, thm-restate]{oasics-v2021}

\pdfoutput=1 
\hideOASIcs 

\usepackage{booktabs}
\usepackage{adjustbox}
\usepackage{xspace}
\usepackage{mathtools}

\definecolor{darkgreen}{RGB}{34,139,34} 
\usepackage{longtable} 
\usepackage{pdflscape} 

\usepackage{booktabs}

\usepackage[ruled,linesnumbered,algosection,noend]{algorithm2e} 
\SetAlgoSkip{4pt}

\usepackage{ragged2e}
\usepackage{tcolorbox}
\tcbuselibrary{skins}
\usepackage{array}
\definecolor{ovgray}{gray}{0.97}
\newcolumntype{Y}{>{\RaggedRight\arraybackslash}X}
\newcolumntype{K}{>{\bfseries\RaggedRight\arraybackslash}p{0.20\linewidth}}

\newcolumntype{L}{>{\RaggedRight\arraybackslash}p{0.22\linewidth}}
\newcolumntype{R}{>{\RaggedRight\arraybackslash}p{0.74\linewidth}}

\newcommand{\ATFCMOPT}{\ensuremath{\texttt{ATFCM}^{OPT}}\xspace}

\newcommand{\ATFCM}{\ensuremath{\texttt{ATFCM}}\xspace}

\newcommand{\ATFCMoDELAY}{\ensuremath{K_d}\xspace}
\newcommand{\ATFCMoNUMSEC}{\ensuremath{K_{\#s}}\xspace}
\newcommand{\ATFCMoSECCHANGES}{\ensuremath{K_{ds}}\xspace}
\newcommand{\ATFCMoREROUTED}{\ensuremath{K_{r}}\xspace}
\newcommand{\ATFCMoRECONFIG}{\ensuremath{K_{c}}\xspace}

\makeatletter
\providecommand{\leftsquigarrow}{\mathrel{\mathpalette\reflect@squig\relax}}
\newcommand{\reflect@squig}[2]{\reflectbox{$\m@th#1\rightsquigarrow$}}
\makeatother
\definecolor{backcolor}{rgb}{0.95,0.95,0.95}

\lstdefinestyle{mystyle}{
    backgroundcolor=\color{backcolor},
    breaklines=true,
    numbers=left,
    numbersep=5pt,
    basicstyle=\ttfamily\scriptsize,
    captionpos=b,
    literate={:-}{{$\;\leftarrow$}}1
             {!=}{{$\neq$}}1
             {:~}{{$\;\leftsquigarrow$}}1,
    mathescape=true
}

\title{ASPaeroFlow: Decomposition Heuristics for Joint Air Traffic Flow \& Capacity Management}
\titlerunning{Decomposition Heuristics for Joint Air Traffic Flow \& Capacity Management}
\author{Alexander Beiser}{TU Wien, Vienna, Austria}{alexander.beiser@tuwien.ac.at}{https://orcid.org/0009-0009-4252-1043}{}
\author{Markus Hecher}{University of Potsdam, Potsdam, Germany \& CNRS, Artois University (CRIL), Artois, France}{hecher@cril.fr}{https://orcid.org/0000-0003-0131-6771}{}
\author{Nysret Musliu}{TU Wien, Vienna, Austria}{nysret.musliu@tuwien.ac.at}{https://orcid.org/0000-0002-3992-8637}{}
\author{Georg Trausmuth}{Frequentis AG, Vienna, Austria}{Georg.TRAUSMUTH@frequentis.com}{}{}
\author{Stefan Woltran}{TU Wien, Vienna, Austria}{stefan.woltran@tuwien.ac.at}{https://orcid.org/0000-0003-1594-8972}{}

\authorrunning{Beiser et al.} 

\Copyright{Alexander Beiser, Markus Hecher, Nysret Musliu, Georg Trausmuth, and Stefan Woltran} 
\ccsdesc[100]{Computing methodologies~Artificial intelligence} 

\keywords{Air Traffic Flow and Capacity Management, ATFCM, Answer Set Programming, Heuristics, Open Data}

\category{} 

\relatedversion{The full version excluding supplementary material is published at ATMOS 2026:\\ \url{https://doi.org/10.4230/OASIcs.ATMOS.2026.13}
}

\supplement{\url{https://doi.org/10.5281/zenodo.21869481}}
\acknowledgements{
We thank Dr. Alba Agustin for sharing the literature benchmarks.
This research was supported by Frequentis and the Austrian Science Fund (FW), grant 10.557766/COE12.
Hecher has been supported by the French National Research Agency (ANR), grant ANR-25-CE23-7647-01, and the Austrian Science Fund (FWF), grant 10.55776/J4656.
}

\nolinenumbers 

\EventEditors{Valentina Cacchiani and Stefan Funke}
\EventNoEds{2}
\EventLongTitle{26th Symposium on Algorithmic Approaches for Transportation Modelling, Optimization, and Systems (ATMOS 2026)}
\EventShortTitle{ATMOS 2026}
\EventAcronym{ATMOS}
\EventYear{2026}
\EventDate{September 3--4, 2026}
\EventLocation{L’Aquila, Italy}
\EventLogo{}
\SeriesVolume{147}
\ArticleNo{13}

\allowdisplaybreaks

\begin{document}

\maketitle

\begin{abstract}
%
While mathematical models act as vital decision support systems for operational Air Traffic Flow and Capacity Management (ATFCM), existing approaches isolate Air Traffic Flow Management (ATFM) from Dynamic Airspace Configuration (DAC).
This separation introduces an unresolved circular dependency between fixed-demand and fixed-capacity assumptions.
Although joint optimization resolves this gap, the enlarged search space renders exact models computationally intractable for medium- to large-scale instances.
To bridge this gap, we propose \emph{ASPaeroFlow}: a heuristic for the joint ATFCM;
it combines instance-space decomposition heuristics with a local exact approach using Answer Set Programming.
We benchmark ASPaeroFlow from small to industry-sized instances and compare it with exact and alternative approaches.
The results indicate that (1) the heuristic provides a computational middle ground between exact methods and operational baselines;
(2) simultaneous optimization can outperform sequential optimization on joint ATFCM;
and (3) an ablation study indicates that DAC has a larger impact on solution quality than flow measures.
\end{abstract}
\section{Introduction}
\label{sec:introduction}

Operational \emph{Air Traffic Flow and Capacity Management} (ATFCM) aims to balance demand and capacity to ensure the safe and efficient flow of air traffic.
Demand is the number of flights intending to traverse through a \textit{sector}, while capacity is the maximum number of flights a sector can safely handle.
Sector capacity is limited by human factors, as \emph{Air Traffic Controllers} (ATCs) are responsible for ensuring separation between aircraft.
\textit{Overloading} a sector means that more flights are in the sector than can be handled by ATCs.
To ensure safety, demand must remain below capacity at all times~\cite{cook_european_2007}.

\emph{Automated decision support systems} help operational ATFCM by optimizing on mathematical models, which suggest aircraft reroutings, delayings, or sector restructurings to optimize airspace usage.
However, the operationally used algorithm \emph{Computer-Assisted Slot Allocation} (CASA), which works according to a ``First-come, First-served'' scheme, has been shown to yield unsatisfactory results compared to more sophisticated optimization models~\cite{eurocontrol_experimental_centre_innovative_2001,bucuroiu_european_2025}.
To circumvent this limit, state-of-the-art (SOTA) work proposes mathematical models for optimizing ATFCM.
Still, SOTA approaches can be largely grouped into one of two classes:
work focusing on the aircraft-level (so-called \emph{Air Traffic Flow Management} (ATFM)) via delays and reroutes~\cite{bertsimas_air_1998,agustin_air_2012,garcia-heredia_combinatorial_2019}
and work focusing on the sector-level (so-called \emph{Dynamic Airspace Configuration} (DAC))~\cite{delahaye_genetic_1994,yin_multi-objective_2016,lui_airspace_2024,chandra_integration_2024}.
Treating ATFM and DAC separately introduces unresolved circular dependencies and makes it hard to compare the actual benefits of ATFM/DAC.
In contrast, \emph{Joint ATFCM} methods~\cite{beiser_LPNMR_2026} would 
(1) resolve the circular dependency between fixed-capacity and fixed-demand assumptions,
(2) render algorithms for isolated subproblems comparable against a joint benchmark,
and (3) enable simultaneous optimization paradigms, which exploit the enlarged search space.
However, while joint ATFCM models exist, their enlarged search space renders them computationally intractable for medium- and large-scale instances when solved exactly.
Therefore, the primary research gap is the lack of \emph{scalable} joint ATFCM methods for industry-sized instances.
  \begin{figure}
    \centering
      \includegraphics[width=14cm]{imgs/schematics/general_schematic-Copy.pdf}
      \caption{
        Illustrative schematics of the problem setting including the main contribution.
      }
      \label{fig:schematic-concept}
  \end{figure}

\subparagraph{Contributions.}
This paper partially addresses this gap by proposing \emph{ASPaeroFlow}: a heuristic for \emph{joint ATFCM} which optimizes ATFM and DAC simultaneously.
See Figure~\ref{fig:schematic-concept} for the conceptual steps.
In detail, our contributions are as follows:
\begin{itemize}
    \item \textbf{ASPaeroFlow.}
    %
    We propose heuristics that derive 0-overload solutions from initial flight schedules and sector configurations.
    The approach employs an iterative instance-space decomposition to partition global airspace overloads into localized, timestep-specific subproblems.
    These tractable instances are solved exactly via the Logic Programming technique Answer Set Programming (ASP)~\cite{eiter_answer_2009},
    leveraging its expressive rule-based modeling and native integration of soft constraints for domain-specific optimization.
    %
    \item \textbf{Data Generator.} We generate realistic industry-sized (large) instances on real-world navpoint graphs for scaling analysis.
    \item \textbf{Benchmarks \& Ablation.} We evaluate ASPaeroFlow in small~\cite{beiser_LPNMR_2026} to large instances, and other instances from the literature~\cite{agustin_air_2012}.
    ASPaeroFlow bridges the scalability challenge of exact models while significantly outperforming operational baselines.
    Further, the experiments suggest that simultaneous optimization outperforms sequential approaches when DAC alone does not suffice,
    and an ablation study highlights DAC as the primary driver for overload reduction.
\end{itemize}

\subparagraph{Related Work.}
\textbf{Dynamic Airspace Configuration (DAC)} constructs airspace from atomic blocks, distinguishing it from free-partitioning DAS~\cite{lui_robust_2025}.
Automated sector design utilizes genetic algorithms, Voronoi diagrams~\cite{sergeeva_3d_2015, yin_multi-objective_2016, xue_airspace_nodate, chandra_integration_2024}, graph methods~\cite{feng_graphdac_2023}, machine learning~\cite{xu_dynamic_2024}, and MIP~\cite{lui_airspace_2024}.
Heuristics for sector design exist:
sectors should be connected; flights should stay in a sector for a minimum amount of time and not re-enter~\cite{sergeeva_3d_2015}.
Joint ATFCM constructs composite sectors (containing multiple atomic sectors) from atomic sectors (which represent exactly one navpoint vertex).
ASPaeroFlow ensures connected sectors.
%
%
\textbf{Flow Optimization (ATFM)} aims at routing network flow (aircraft) optimally, while adhering to side constraints.
The operationally used \emph{Computer Assisted Slot Allocation} (CASA) heuristics work according to the ``First Planned - First Served'' principle and delays aircraft~\cite{eurocontrol_experimental_centre_innovative_2001,bucuroiu_european_2025}.
Optimization models addressing delay and rerouting explore fairness~\cite{bertsimas2016fairness}, lexicographic objectives~\cite{dalmau_multi-objective_2024}, and computational aspects~\cite{berstimas_ATFM_2011, bolic_reducing_2017, dal_sasso_planning_2019, balakrishnan_optimal_nodate}.
It was shown that the restriction to delaying is already $NP$-hard~\cite{bertsimas_air_1998}.
Crucially, most existing methods integrate DAC solely as a fixed input to ATFM~\cite{agustin_air_2012,garcia-heredia_combinatorial_2019};
the recently proposed joint ATFCM model~\cite{beiser_LPNMR_2026} circumvents this,
however, \emph{joint sequential or simultaneous optimization for industry-sized instances remains an open challenge}, as exact methods struggle already on small instances.\\
\textbf{Answer Set Programming (ASP)} is a symbolic-AI paradigm rooted in logic programming offering exact optimization with high interpretability, deployed for example in industrial scheduling~\cite{balazova_smart_2025, comploi-taupe_interactive_nodate, abels_train_2021}.
Regarding aviation, proposals for the usage of ASP for efficient drone scheduling~\cite{nguyen_optimized_2024} or optimization for mission schedules on vertiports~\cite{kim_enhancing_2025} exist.
We use ASP for several reasons, including its ``natural'' problem modeling, rapid prototyping, and future Explainable AI (XAI) integration potential.
To circumvent ASP's primary scalability limitation --- the \emph{grounding bottleneck}~\cite{semmelrock_investigating_2025} --- we embed it within a problem-decomposition heuristic~\cite{el-kholany_problem_2022}.

\smallskip
Next, we present the necessary preliminaries (Section~\ref{sec:preliminaries}), followed by the introduction of the ATFCM model in Section~\ref{sec:atfcm-model} and ASPaeroFlow (Section~\ref{sec:asp-aero-flow}).
%
%
%
We present the experiments and their results in Section~\ref{sec:benchmarks-and-experiments} and then conclude (Section~\ref{sec:conclusion}).

\section{Preliminaries}
\label{sec:preliminaries}
\subparagraph{Graph Theory.}
We consider weighted undirected graphs $G = (V,E)$ using standard Euclidean or geodesic distance metrics.
A trajectory $\textit{tr}$ is a simple path augmented with strictly increasing timestamps, yielding a sequence of $(v_i, t_i)$ pairs.

\subparagraph{Answer Set Programming (ASP).}
ASP~\cite{gelfond_logic_2002} is a declarative, logic-based problem-solving paradigm.
We omit formal semantics (e.g., stable models) for brevity, referring to standard literature~\cite{eiter_answer_2009}, and focus on the syntax necessary for our encodings.
An ASP program consists of rules $H \leftarrow B$, where the head $H$ is derived if the body $B$ is satisfied.
Variables are capitalized.
Generating an answer set (solution) to an ASP program is usually done by first \emph{grounding} (instantiating the variables) and then solving.
We can use systems such as Clingo~\cite{gebser_theory_2016} to generate an answer set.
We utilize standard extensions~\cite{gebser_clingo_2019} such as constraints (empty head, which invalidate a solution if the body is true), choice rules (to define search spaces), aggregates, and soft constraints (for lexicographic optimization).
The following snippet illustrates these constructs.\footnote{Note that the predicates used in this snippet (e.g., \texttt{reroute/1}, \texttt{overload/3}) are illustrative; the complete ASP encoding for our model is defined later in Section 4.4. and the appendix.}
\begin{lstlisting}[morecomment={[l]{\%}}, commentstyle=\color{darkgreen}]
% Facts (no variables and no body) describing the input data:
config(0,10). config(1,5). fpl(0,5,1). fplt(0,7,2).
% Rule expressing flights are rerouted unless they choose traj. 0.
reroute(ID) :- flightID(ID), not chtrj(ID,0).
% Choice rule: Choose exactly one trajectory P for each flight ID.
1{chtrj(ID,P):trj(ID,P)}1 :- flightID(ID).
% Rule with Aggregate: Count demand and flag overloads.
overload(SEC,T,LOAD-CAP) :- sec_cap(SEC,CAP,T),#count{ID:sec_f(ID,SEC,T)}=LOAD,LOAD>CAP.
% Constraint: Reject solutions where a flight does not occur.
:- flightID(ID), not flight_occurs(ID).
% Soft Constraint: Minimize overload at priority level 10.
:~ overload(SEC,T,OVER). [OVER@10,SEC,T]
\end{lstlisting}

\section{Joint ATFCM Model}
\label{sec:atfcm-model}

  \begin{figure}
    \centering
      \includegraphics[width=13cm]{imgs/schematics/schematics-ATMOS-PAPER-Copy_of_Copy_of_ATFM_SCHEMATICS_Scenarios.drawio.pdf}
      \caption{
        Schematics of the ATFCM problem with available actions.
        At time 11 an overload in sector $S_1$ occurs (A). 
        The model allows for three ways to resolve this issue: 
        Apply a delay to a flight (B), reroute a flight (C), or restructure the airspace (D).
      }
      \label{fig:schematic-actions}
  \end{figure}

We present the Joint ATFCM model~\cite{beiser_LPNMR_2026}, where Figure~\ref{fig:schematic-actions} shows our running example. 
\begin{definition}[Navpoint Graph]
    The navpoint graph $G = (V,E)$ is an undirected graph, where each vertex is labeled with spatial geodesic coordinates.
    We distinguish between airports and en-route vertices, 
    where with $\textit{airport}(G)$ and $\textit{en-route}(G)$ we denote the respective sets.
    Let $d : V^2 \rightarrow \mathbb{R}$ be the distance function.
   %
\end{definition}
%
%
\begin{example}
    Figure~\ref{fig:schematic-actions} shows a navpoint graph $G$ (gray).
    Let $G = (V,E)$ with $V = \{a_0, a_1, $ $ v_0, \ldots, v_8\}$, where $\{a_0, a_1\} \subseteq \textit{airport}(G)$ and $\{v_0, \ldots, v_8\} \subseteq \textit{en-route}(G)$.
    Further, $E = \{(a_0,v_0), (a_0,v_3), (a_0,v_6)\} \cup $
    $\{(v_0,v_1), (v_1,v_2), (v_3,v_4), $ $ (v_4,v_5), (v_6,v_7), $ $ (v_7,v_8)\} \cup $ $\{(v_2,a_1),(v_5,a_1),(v_8,a_1)\}$.
    We assume $\forall v_i,v_j \in V: d(v_i,v_j) = 1$ 
\end{example}
%
%
\begin{definition}[Time model]
$T_{\textit{gran}}$ discretizes time into $T_{\textit{gran}}-$timesteps per hour. 
We assume one-day instances; then the set of timesteps $T$ is $T \coloneqq \{t \mid t \in \mathbb{N}, 0 \leq t \leq T_{\textit{gran}} \cdot 24\}$.
\end{definition}
%
%
\begin{example}
\label{ex:graph-model}
    For $T_{\textit{gran}} {=} 4$ we make $15$-minute timesteps, i.e., $4 {\cdot} 24 + 1 = 97 = |T|$ timesteps.
\end{example}
%
%
\begin{definition}[Sector Configuration]
\label{def:DAC-MODEL}
%
A sector $i$ is a set of vertices $\textit{sec}(i,t) \subseteq V$, identified by a representative sector vertex $i \in V$, where $i \in \textit{sec}(i,t)$ ensures the sector index directly corresponds to one of its contained vertices.
ATC-teams have a fixed capacity and have a fixed location, implying atomic capacities of vertices $c_i$ for $i \in V$ ($t=0$): $\textit{cap}(i,0) = c_i$.
A \emph{composite} sector has capacity $\textit{cap}(i,t) = \max_{j \in \textit{sec}(i,t)} \textit{cap}(j,0)$, simulating that the best ATC-team of a region takes command.
To get the sector $i$ for a vertex $v \in V$ for $t \in T$, we define $i = \textit{sec}^{-1}(v,t)$.
\end{definition}
%
%
\begin{example}
\label{ex:DAC-model}
    Ctd. example.
    Static model: $\forall t \in T$. $\textit{sec}(a_0,t) = \{a_0\}$, $\textit{sec}(a_1,t) = \{a_1\}$, 
    $\textit{sec}(v_0,t) = \{v_0, v_1, v_3, v_4\}$ ($S_1$), $\textit{sec}(v_2,t) = \{v_2, v_5\}$ ($S_2$), and $\textit{sec}(v_6,t) = \{v_6,v_7,v_8\}$ ($S_3$).
    Dynamic model, for: $9 \leq t \leq 13$: $\textit{sec}(v_0,t) = \{v_0,v_1\}$ ($S_{1A}$) and $\textit{sec}(v_3,t) = \{v_3,v_4\}$ ($S_{1B}$).
    Atomic capacity: $\forall v \in V:\textit{cap}(v,0) = 1$.
    Static model:$\forall t > 0$: $\textit{cap}(a_0,t) = \textit{cap}(a_1,t) = \textit{cap}(v_0,t) = \textit{cap}(v_2,t) = \textit{cap}(v_6,t) = 1$, with a combined capacity of $5$.
    Dynamic model ($\mathbf{SEC}$): for $9 \leq t \leq 13$ the combined capacity is $6$ (increase of capacity through splitting).
\end{example}
%
%
%
\begin{definition}[Flight Model]
Flights $f \in F$ are modeled as trajectories on the navpoint graph.
Let $f \in F$ be a flight and $f_{id}$ be its flight id: $f = (id_f,((v_0, t_0),(v_1,t_1), \ldots, (v_n,t_n)))$.
\end{definition}
%
%
\begin{definition}[Aircraft Model]
We map flights to aircraft (as a single physical aircraft can have multiple flights per day, meaning a delay in an early flight can propagate downstream): an aircraft $a \in \texttt{A}$ is a three tuple $(id_a, v_a, F_a)$, where $id_a$ is the unique id, $v_a$ is the velocity, and $F_a$ is its set of flights.
%
\end{definition}
%
%
\begin{example}
\label{ex:flight-model}
Ctd. example.
 Let $A = \{p_0,p_1\}$ with $p_0 = (id_{p_0}, v_{p_0}, \{f_0\})$ and $p_1 = (id_{p_1},v_{p_1}, \{f_1\})$.
 Let $f_0,f_1 \in F$ with $f_0 = (0,((a_0,9),(v_0,10),$ $(v_1,11),$ $(v_2,12),(a_1,13)))$ and 
 $f_1 = (1,((a_1,9),(v_5,10),$ $(v_4,11),(v_3,12),(a_0,13)))$.
\end{example}
%
%
\begin{definition}[$\ATFCM$ Instance]
An ATFCM instance is $I = (G, T_{\textit{gran}}, T, \textit{sec}, \textit{cap}, \texttt{A})$,
where $G$ is a navpoint graph,
$T_{\textit{gran}}$ defines the time granularity,
$T$ the timesteps,
$\textit{sec}$ the initial sector configuration,
$\textit{cap}$ the atomic capacities,
and $\texttt{A}$ the aircraft.
We denote with $F$ the set of filed flights.
All ids are unique.
    %
\end{definition}
%
%
\begin{definition}[ATFCM Solution]
\label{def:atfcm-solution}
The ATFCM solution is given by: $\mathbf{S} = (\mathbf{T}, \mathbf{SEC}, \mathbf{A})$,
where $\mathbf{T}$ are the solution timesteps,
$\mathbf{SEC}$ is the solution DAC, and
$\mathbf{A}$ are the solution aircraft.
$\mathbf{T}$ might be expanded beyond the initial $T$ (an instance is always solvable).
The flown flights $f \in \mathbf{F} = \bigcup_{(id_a,v_a,F_a) \in \mathbf{A}} F_a$ may be delayed.
If a flight is delayed and/or rerouted, this is indicated by $r_f=1$.
Sectors in $\mathbf{SEC}$ are allowed to be changed. 
We require that airport sectors remain atomic and en-route sectors remain connected to partially accommodate operational requirements.
\noindent
\textbf{Demand}.
Let $f \in \mathbf{F}$ be a flight reaching navpoint $v_a \in V$ at time $t_a$ and then flying to navpoint $v_b \in V$, which it reaches at time $t_b$.
Assuming that $\Delta t = t_b - t_a$, then $f$ is in the timespan $t \in [t_a, t_a + \frac{\Delta t}{2}]$ in sector $s_i = \textit{sec}^{-1}(v_a,t)$ ($\textit{over}(f,s_i,t) = 1$),
and for $t' \in [t_a + \frac{\Delta t}{2} + 1, t_b]$ in sector $s_{i+1} = \textit{sec}^{-1}(v_b,t')$ ($\textit{over}(f,s_{i+1},t') = 1$).
For a time $t \in T$, and a sector $i \in V$, we measure demand $q(i,t)$, as the number of aircraft currently in the sector, i.e.,  $q(i,t) = \sum_{f \in \mathbf{F}} \textit{over}(f,i,t)$.
For each timestep $t \in \mathbf{T}$, we define with $\mathbf{SEC}$ the DAC.
We require for a solution $\mathbf{S}$ to be accepted that for all $t \in \mathbf{T}$, $i \in V$, demand $q(i,t) \leq \textit{cap}(i,t)$.
\end{definition}
%
%
\begin{example}
Overload (A): Let $\mathbf{S} = (T,\textit{sec},A)$, then $q(v_0,11) = 2 > \textit{cap}(v_0,11) = 1$.
Delay (B): Let $\mathbf{S} = (T,\textit{sec},\mathbf{A})$ with $\mathbf{A}$, where $\mathbf{F} = \{f_0,f_1'\}$ s.t. $f_1' = (1,((a_1,10),(v_5,11),(v_4,12),$ $(v_3,13),(a_0,14)))$.
Reroute (C): Let $\mathbf{S} = (T,\textit{sec},\mathbf{A})$ with $\mathbf{A}$, where $\mathbf{F} = \{f_0,f_1'\}$ s.t. $f_1' = (1,((a_1,9),$ $(v_8,10),(v_7,11),(v_6,12),(a_0,13)))$.
Reconfig (D): Let $\mathbf{S} = (T,\mathbf{SEC},A)$ with $\mathbf{SEC}$ (Ex.~6).
\end{example}
%
%
%
\begin{definition}[Optimization Problem]
\label{def:optimization-problem}
A solution $\mathbf{S}$ is optimized according to efficiency (Arrival Delay, Active en-route sectors, and Navpoint-sector changes) and stability/fairness (Reroutes/delays and Reconfigurations) soft constraints.
An accepted solution $\mathbf{S}$ is strictly preferred to $\mathbf{S}'$ ($\mathbf{S} \prec \mathbf{S}'$) if its objective tuple is lexicographically smaller: 
$(\ATFCMoDELAY, \ATFCMoNUMSEC, \ATFCMoSECCHANGES, \ATFCMoREROUTED, \ATFCMoRECONFIG)_{\mathbf{S}} \prec_{\text{lex}} (\dots)_{\mathbf{S}'}$.
Let $t_f^a$ and $t_f^e$ be the actual and expected arrival times.
The individual criteria to be minimized are defined as follows, utilizing indicator function~$\mathbb{I}(\cdot)$:

\begin{align*}
    \ATFCMoDELAY &= \sum_{f \in \mathbf{F}} d_f  = \sum_{f \in \mathbf{F}} t^a_f - t^e_f && \text{(Arrival delay)} \\
    \ATFCMoNUMSEC &= \sum_{t \in \mathbf{T}_{>0}} \sum_{i \in V} \mathbb{I}\big(|\mathbf{SEC}(i,t)| > 0\big) && \text{(Active Sectors)} \\
    \ATFCMoSECCHANGES &= \sum_{t \in \mathbf{T}_{>1}} \sum_{v \in V} \mathbb{I}\big(\mathbf{SEC}^{-1}(v,t-1) \neq \mathbf{SEC}^{-1}(v,t)\big) && \text{(Navpoint-sector changes)} \\
    \ATFCMoREROUTED &= \sum_{f \in \mathbf{F}} r_f && \text{(Reroutes/delays)} \\
    \ATFCMoRECONFIG &= \sum_{t \in \mathbf{T}_{\geq 0}} \sum_{v \in V} \mathbb{I}\big(\textit{sec}^{-1}(v,t) \neq \mathbf{SEC}^{-1}(v,t)\big) && \text{(Reconfigurations)}
\end{align*}
where $\mathbf{SEC}^{-1}(v, t)$ returns the representative sector index $i$ containing vertex $v$ at time $t$.
\end{definition}

\begin{table}[t]
\caption{
Representative parameter bounds used in $ASPaeroFlow_{r,d}$'s local ASP subproblems. Exact values are variant dependent (cf.\ Section~\ref{sec:benchmarks-and-experiments}).
}
\label{tab:bounds-decomposition}
\centering
\tiny
\setlength{\tabcolsep}{5pt}
\renewcommand{\arraystretch}{1.15}
\begin{tabular}{p{0.10\linewidth} p{0.68\linewidth} p{0.15\linewidth}}
\toprule
\multicolumn{3}{l}{\textbf{Chosen bounds in instance-space decomposition}}\\
\midrule
Parameter & Operational rationale & $\text{ASPaeroFlow}_{r,d}$ \\
\midrule
$f_{\max}$ &
Limit the local subproblem to flights that actually contribute to the current overload; keeps grounding and solving stable while capturing the dominant contributors. & $2$ flights\\
$d_{\max}$ &
Strategic regulations usually consider bounded departure shifts; a short delay window provides a realistic control measure without exploding combinations. & $5$ steps\\
$r_{\max}$ &
A small set of diverse candidate routes (e.g., $k$-shortest with similarity filtering) reflects that only few reroute options are operationally acceptable and keeps the action set manageable. & 
$3$ routes \\
$p_{\max}$ &
Use a small set of sector split options for the overloaded sector; captures the main capacity measure (partitioning) while avoiding enumeration of arbitrary partitions. &
$2$ splits \\
\bottomrule
\end{tabular}
\end{table}

\section{New Decomposition Heuristics}
\label{sec:asp-aero-flow}
Although some large-scale ATFM models can be solved exactly by exploiting inherent structures~\cite{balakrishnan_optimal_nodate},
we expect that integrating DAC introduces dense coupling constraints. These constraints likely create structural difficulties that prevent scaling exact solution approaches to medium- and large-scale instances.
Moreover, the required sequential bounding for the five-level lexicographic objective severely degrades solver performance.
Consequently, exact methods suffer from combinatorial explosion, rendering multi-objective exact solving computationally intractable even for relatively small instances~\cite{beiser_LPNMR_2026}.
To ensure operational feasibility, we propose a heuristic approach:
\emph{ASPaeroFlow}\footnote{GitHub Link: \url{https://github.com/alexl4123/ASPaeroFlow-Optimizer}}.
ASPaeroFlow decomposes the problem around the next overloaded sector, building a bounded local subproblem containing only the most relevant flights and the overloaded sector.
Specifically, we cap (i) the number of considered flights ($f_{\max}$), (ii) the delay window per flight ($d_{\max}$), (iii) the number of alternative routes per flight ($r_{\max}$), and (iv) the number of partition options for the overloaded sector ($p_{\max}$); see also Table~\ref{tab:bounds-decomposition}.
An exact solver then selects the best local solution under our lexicographic objective, and the procedure iterates until all overloads are resolved.
The chosen local bounds balance search-space exploration with solver runtimes.
Empirically, increasing the flight limit ($f_{\max} > 2$) or the partitioning limit ($p_{\max} > 2$) drastically increases solving times.
Conversely, the routing ($r_{\max}$) and delay ($d_{\max}$) bounds are computationally more forgiving; the chosen parameters were determined via empirical parameter tuning to ensure rapid local convergence while maintaining high solution quality.

\subsection{Optimization Loop}

Algorithm~\ref{alg:heuristic-alg-1} shows the overall algorithm.
In Lines~(\ref{alg:heuristic-alg-1:line-1})--(\ref{alg:heuristic-alg-1:line-4}) we do pre-processing steps, by converting the instance to the internal matrix representation $\mathbf{M}$ and computing the overloads.
We consider as overloads the sum of exceedances: $K_o = \sum_{t \in T, i \in V} \max\{0,q(i,t) - \textit{cap}(i,t)\}$. 
The algorithm iteratively solves overloads until $K_o = 0$ (Line~(\ref{alg:heuristic-alg-1:line-5})).
The decomposed instance $I$ is computed by instance space decomposition (Line~(\ref{alg:heuristic-alg-1:line-6}) --- Algorithm~\ref{alg:instance-decomp}),
which is then given with the encoding $E$ to the local optimization procedure (Line~(\ref{alg:heuristic-alg-1:line-7})).
The local optimization procedure computes a new candidate solution $\mathbf{M}'$.
A \emph{correct local optimization encoding} selects among its available choices the solution that reduces the magnitude of overload the most,
followed by adhering to the criteria in Definition~\ref{def:optimization-problem}.
The candidate solution $\mathbf{M}'$ is accepted if it has less overloads (Lines~(\ref{alg:heuristic-alg-1:line-10})--(\ref{alg:heuristic-alg-1:line-13})).
Otherwise, the parameters of the decomposition are adapted to allow for other solutions to be explored to ensure termination (Line~(\ref{alg:heuristic-alg-1:line-15})).
%

\subsection{Instance Space Decomposition}

\begin{figure}[t]
    \begin{algorithm}[H]
        \small
        \KwData{Instance $\mathcal{I}$, Encoding $E$, Parameters $\mathcal{P}$}
        $\mathbf{M} \leftarrow \textit{convertInstance}(\mathcal{I})$ ; 
        \label{alg:heuristic-alg-1:line-1}
        \qquad $\textit{overloads} \leftarrow \textit{computeOverloads}(\mathbf{M})$ \; 
        $\mathcal{P}' \leftarrow \textit{copy}(\mathcal{P})$ \;\label{alg:heuristic-alg-1:line-4}
        \While{$\textit{overloads} > 0$} {\label{alg:heuristic-alg-1:line-5}
            $I \leftarrow \textit{decomposeInstance}(\mathbf{M}, \mathcal{P}')$ \; \label{alg:heuristic-alg-1:line-6}
            $\mathbf{M}' \leftarrow \textit{LocalOptimization}(I,E)$ ; \label{alg:heuristic-alg-1:line-7}
            %
            \qquad $\textit{overloads}' \leftarrow \textit{computeOverloads}(\mathbf{M}')$ \;
            \uIf{$\textit{overloads}' < \textit{overloads}$} { \label{alg:heuristic-alg-1:line-10} 
                 $\mathbf{M} \leftarrow \mathbf{M}'$ ;
                 \qquad $\textit{overloads} \leftarrow \textit{overloads}'$ \;
                 $\mathcal{P}' \leftarrow \textit{copy}(\mathcal{P})$ \; \label{alg:heuristic-alg-1:line-13}
            }\uElseIf{$\textit{exhausted}(\mathcal{P}')$} {
                    \Return $(\mathbf{M}, \text{\textbf{False}})$ \tcp*{Termination with residual overload} \label{alg:heuristic-alg-1:line-exhausted}
            }\Else{
                    $\mathcal{P}' \leftarrow \textit{adjustParameters}(\mathcal{P}')$ \; 
                    \label{alg:heuristic-alg-1:line-15}
            }
        }
        \Return $(\mathbf{M}, \text{\textbf{True}})$ \tcp*{Successful termination} \label{alg:heuristic-alg-1:line-return}
       \caption{ASPaeroFlow}
        \label{alg:heuristic-alg-1}
    \end{algorithm}
\end{figure}

Algorithm~\ref{alg:instance-decomp} shows our instance decomposition approach.
We reduce the search space, by limiting the number of investigated flights $f_{\max}$, the delay $d_{\max}$, the alternative routes $r_{\max}$, and the possible DAC partitions $p_{\max}$ (Line~(\ref{alg:instance-decomp:line:1})).
Line~(\ref{alg:instance-decomp:line:2}) obtains the time $t$ and sector $s$ of the first overloaded sector (lexicographically time and sector-wise).
In Line~(\ref{alg:instance-decomp:line:3})
we limit the search space to $f_{\max}$ flights.
The next part in Lines~(\ref{alg:instance-decomp:line:6})--(\ref{alg:instance-decomp:line:15}) is to generate the instance for each of the $f_{\max}$ flights, by computing the alternatives $r_{\max}$ and considering the maximum delay $d_{\max}$.
This is followed by the generation of the instance for DAC, where we consider $p_{\max}$ partitions (Lines~(\ref{alg:instance-decomp:line:16})--(\ref{alg:instance-decomp:line:19})),
ranging from the current configuration $p = 1$ to a partition of the sector $s$ into $p_{\max}$ parts.
Finally, we combine the instance in Line~(\ref{alg:instance-decomp:line:21}) and return it.


\subparagraph{Reroute Generation.}
We compute the $r_{\max}$ routes (Algorithm~\ref{alg:instance-decomp} in Line~\ref{alg:instance-decomp:line:9}), using an iterative algorithm that computes the shortest routes on $G$ between departure and destination airports.
On each iteration the current shortest path $p$ is penalized and a diverse set of routes is ensured by only accepting a path $p$ if its Jaccard similarity to any other already considered path $p'$ is small enough: $\frac{p' \cap p}{p' \cup p} < j_m$, where $j_m = 0.6$. 
Let $w(p)$ be the path weight, then each edge $e$ of $e \in p$ is penalized by $w_{t+1}(e) \leftarrow w_{t}(e) + 0.1 \frac{w(p)}{|p|}$,
where $w_0(e) = d(e)$.

\subparagraph{Alternative Airspace Configurations.}
The $p_{\max}$ graph partitions are computed in Algorithm~\ref{alg:instance-decomp} in Line~(\ref{alg:instance-decomp:line:18}) by an iterative breadth-first search algorithm.
We partition a sector $s$ into $p$ sectors ($\{s_1,\ldots,s_p\} = S$) of approximate size $\frac{|\textit{sec}(s,t)|}{p}$ iteratively.
Assuming we generated $|S| = i-1$ sectors so far, we add a new sector $s_i$ via two steps:
first, we select a $\textit{seed} \in \textit{sec}(s,t) \setminus \left( \cup_{s_j \in S} \textit{sec}(s_j,t) \right)$ vertex and add it to the new sector $s_i = \{\textit{seed}\}$.
Second, we add vertices $v \in \textit{sec}(s,t) \setminus \left( \cup_{s_j \in S} \textit{sec}(s_j,t) \right)$ to $s_i$ in a breadth-first manner (ensuring connection of $s_i$) until $|\textit{sec}(s_i,t)| > \frac{|\textit{sec}(s,t)|}{p}$; then we add $s_i$ to $S$.
%
\subparagraph{Adjust Parameters.}
If search stalls, \texttt{adjustParameters} expands the temporal horizon using a rolling delay ($+5$ timesteps).
After $10$ non-improvement steps, the flight limit reduces to $f_{\max} = 1$.
Assuming any flight can be routed in isolation, the algorithm \emph{terminates} and returns $\mathbf{M}$ (Line~\ref{alg:heuristic-alg-1:line-exhausted}) 
if $f_{\max} = 1$, the subsequent step yields no improvement, and the single flight $f$ remains unroutable due to absolute capacity limits (e.g., zero-capacity sectors).

\begin{algorithm}[t]
    \small
    \KwData{Converted Instance $\mathbf{M}$, Parameters $\mathcal{P}$}
    $f_{\max}, d_{\max}, r_{\max}, p_{\max} \leftarrow \mathcal{P}$ \; \label{alg:instance-decomp:line:1}
    $s,t \leftarrow \textit{computeFirstOverload}(\mathbf{M})$ \; \label{alg:instance-decomp:line:2}
    $F \leftarrow \textit{flightsInOverload}(\mathbf{M}, s, t)$ $\qquad$  $F \leftarrow \textit{sort}(F)$ $\qquad$ $F \leftarrow F[:f_{\max}]$  \; \label{alg:instance-decomp:line:3}
    %
    %
    $\textit{I}_F \leftarrow \emptyset$ \; \label{alg:instance-decomp:line:6}
    \For{$f \textit{ in } F$} { \label{alg:instance-decomp:flights-pt1}
        $\textit{config} \leftarrow 0$ \;
        $\textit{R} \leftarrow \textit{bestRoutes}(f, G, r_{\max})$ \; \label{alg:instance-decomp:line:9}
        \For{$d \textit{ in range}(d_{\max}+1)$}
        {
            \For{$r \textit{ in } R$} {
                $f' \leftarrow \textit{computeNavpointFlightPlan}(f,d,r)$ \;
                $\textit{I}_F \leftarrow I_F \cup \{f'\}$ \;
                $\textit{config} \leftarrow \textit{config} + 1$ \; \label{alg:instance-decomp:line:15}
            }
        } 
    }
    $\textit{I}_S \leftarrow \emptyset$ \; \label{alg:instance-decomp:line:16}
    \For{$p \textit{ in range}(1,p_{\max}+1)$} {
        $\textit{partition}_p \leftarrow \textit{computePartition}(s,p,G)$ \; \label{alg:instance-decomp:line:18}

        $\textit{I}_S \leftarrow I_S \cup \{\textit{partition}_p\}$ \;      \label{alg:instance-decomp:line:19}
    }
    $\textit{I} \leftarrow \textit{I}_F \cup \textit{I}_S$ \; 
    \label{alg:instance-decomp:line:21}
    \Return $\textit{I}$ \;
   \caption{Instance Space Decomposition}
    \label{alg:instance-decomp}
\end{algorithm}

\subsection{Correctness}

We provide a brief argument for correctness of the algorithm.
Let $\mathcal{I}$ be an instance of the optimization problem and let $\mathcal{S}$ be the set of solutions.
For the following proofs, we take the operational assumption that all filed flights can be flown in principle;
meaning that $\forall v \in V:$ $\textit{cap}(v,0) \geq 1$.
We refer to this as \emph{operational feasibility}.
Operational feasibility guarantees that a single flight from an overloaded sector can always be delayed to reduce the magnitude of overload---consequently, if delaying the single flight does not lead to a reduction in overload magnitude, the algorithm can safely terminate.
\begin{theorem}
    Let $\mathcal{I}$ be an instance of the optimization problem and let the operational feasibility assumption hold.
    Then, for any correct encoding E, if ASPaeroFlow (Algorithm~\ref{alg:heuristic-alg-1}) produces an output, its output is a valid solution.
\end{theorem}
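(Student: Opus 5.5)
The argument runs through an invariant over the loop of Algorithm~\ref{alg:heuristic-alg-1}. Call a matrix $\mathbf{M}$ \emph{structurally valid} if it encodes a triple $(\mathbf{T},\mathbf{SEC},\mathbf{A})$ in which:
\begin{itemize}
\item every filed flight occurs exactly once as a time-stamped simple path from its departure to its destination airport;
\item delays only shift timestamps later, so $\mathbf{T}$ may grow;
\item aircraft keep their flights;
\item airport sectors are atomic and en-route sectors are connected.
\end{itemize}
Acceptance (Definition~\ref{def:atfcm-solution}) then only adds the capacity condition $q(i,t) \leq \textit{cap}(i,t)$ for all $t \in \mathbf{T}$ and $i \in V$, which holds exactly when $K_o = 0$. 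I will show that every $\mathbf{M}$ held by the algorithm is structurally valid. Since the algorithm outputs a solution (the \textbf{True} return in Line~\ref{alg:heuristic-alg-1:line-return}) only after the while-guard has failed, which means $\textit{overloads}=K_o=0$, that output is then accepted.

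\textbf{Base case.} $\mathbf{M} = \textit{convertInstance}(\mathcal{I})$ is just the instance: the filed flights are trajectories on $G$, the sector map is the initial $\textit{sec}$, and $\mathbf{T}=T$. I take as given that the input configuration already satisfies the airport-atomicity and connectivity requirements.

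\textbf{Inductive step.} $\mathbf{M}$ changes only when $\mathbf{M}'$ is accepted, so I must show that $\textit{LocalOptimization}(I,E)$ returns a structurally valid matrix whenever $\mathbf{M}$ is one. I would go through Algorithm~\ref{alg:instance-decomp} and check each candidate it offers.
\begin{itemize}
\item \emph{Flight candidates.} Each $f'$ is built by \textit{computeNavpointFlightPlan}$(f,d,r)$, where $r$ is a path on $G$ between the same airports and $d \in \{0,\dots,d_{\max}\}$ is a nonnegative delay. Timestamps are therefore strictly increasing, and $\mathbf{T}$ is extended if needed.
\item \emph{Sector candidates.} The partitions come from the BFS construction. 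This construction covers $\textit{sec}(s,t)$ and creates disjoint blocks, and each block is connected because it grows breadth-first from a seed. The only subtle case is a seed whose BFS is cut off by vertices already taken; the result is still connected, just possibly smaller than $|\textit{sec}(s,t)|/p$. Airport sectors are atomic, so the overloaded sector $s$ being split is either an atomic airport, where $p=1$ is the only meaningful option, or an en-route sector.
\end{itemize}
A correct encoding $E$ picks exactly one candidate per selected flight, with $(d,r)=(0,\text{current route})$ among the options, and exactly one partition, with $p=1$ meaning the current configuration. It leaves all other flights and sectors untouched. So $\mathbf{M}'$ agrees with $\mathbf{M}$ outside the local subproblem and is structurally valid inside it.

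One point needs care here: whether delaying $f$ must also delay later flights of the same aircraft. If $\mathbf{M}'$ is recomputed so that downstream flights are shifted accordingly, the invariant is preserved. I would state this as part of what a correct encoding, together with the recomputation of $\mathbf{M}'$, guarantees.

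\textbf{Expected main obstacle.} The hard part is pinning down what ``correct encoding'' guarantees. The paper only defines it as choosing the overload-minimal local option under the objective of Definition~\ref{def:optimization-problem}, whereas the proof needs it to return a consistent assignment: one option per flight and one partition, so no duplicated or dropped flights and no overlapping sectors. I would make this explicit as the meaning of correctness. The capacity values also need checking: after a split, $\textit{cap}$ of each new composite sector is recomputed as the maximum over its atomic capacities. By operational feasibility each such maximum is at least $1$, so the recomputation is well defined.

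Termination with \textbf{False} (Line~\ref{alg:heuristic-alg-1:line-exhausted}) is not a claim of validity. I would either read ``produces an output'' as referring to the \textbf{True} return, or note that the \textbf{False} flag explicitly marks residual overload. Either way the theorem concerns only outputs flagged \textbf{True}.
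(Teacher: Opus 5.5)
Your structural invariant for the \textbf{True} return is sound and matches the second half of the paper's proof. The loop guard forces $K_o=0$. The decomposition only offers valid trajectories and connected partitions. A correct encoding selects exactly one option per flight and one configuration. The paper settles your downstream-legs concern the same way, by noting that the decomposition puts each considered flight's subsequent flights into the local instance.

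The gap is how you dispose of Line~\ref{alg:heuristic-alg-1:line-exhausted}. The pair $(\mathbf{M},\textbf{False})$ is an output of Algorithm~\ref{alg:heuristic-alg-1}, and excluding it is exactly what the operational-feasibility hypothesis is for. The Observation right after the theorem says that without this assumption the algorithm may return an incorrect solution, precisely through this premature exhaustion exit. By reading ``produces an output'' as ``returns \textbf{True}'', you prove a weaker statement in which the hypothesis does no work: you use $\textit{cap}(v,0)\geq 1$ only to make capacities well defined. That is a sign the theorem's actual content has been dropped.

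The missing step is a contradiction argument showing that the exhausted branch is never taken, as the paper's proof does. Suppose it is taken. Then $f_{\max}$ has been reduced to $1$, and the rolling delay window has been advanced until the earliest admissible start $t_s$ of the single considered flight $f$ exceeds the latest arrival time of all other flights in $\mathbf{M}$. Here $f$ comes from the first overloaded pair $(s,t)$, and still no improvement was found.

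Now consider the candidate that keeps $f$'s path and only starts it at $t_s$, with its aircraft's later legs shifted along (they do not overlap each other). It flies through otherwise empty airspace. So every sector it visits has demand $1\leq \textit{cap}$, because every composite capacity is a maximum of atomic capacities that are all at least $1$. This candidate therefore creates no new overload and removes $f$'s contribution at $(s,t)$, so it strictly lowers $K_o$. A correct encoding picks the overload-minimizing option, so it returns an $\mathbf{M}'$ with $\textit{overloads}'<\textit{overloads}$. That $\mathbf{M}'$ is accepted, so Line~\ref{alg:heuristic-alg-1:line-exhausted} is never reached, a contradiction. Only together with this argument does your invariant give the theorem as stated.
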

%
%
\begin{proof}[Proof (Sketch).]
    The Algorithm~\ref{alg:heuristic-alg-1} returns the solution $\mathbf{M}$ in Line~\ref{alg:heuristic-alg-1:line-return},
    or in Line~\ref{alg:heuristic-alg-1:line-exhausted}.
    %
    We prove that with the operational feasibility assumption, if Algorithm~\ref{alg:heuristic-alg-1} terminates, it terminates by returning the solution $\mathbf{M}$ in Line~\ref{alg:heuristic-alg-1:line-return}:
    Towards a contradiction assume Algorithm~\ref{alg:heuristic-alg-1} terminates prematurely on exhaustion (Line~\ref{alg:heuristic-alg-1:line-exhausted}).
    For this, it must be the case that for at least $10$ subsequent steps the algorithm did not experience a reduction of overload, although it increased the time window start to $t_s$ and reduced the number of considered aircraft to $f_{\max} = 1$.
    Let the considered flight for optimization be $f = (id,tr)$ for sector $i \in V$ and time $t' \in \mathbf{T}$.
    Further, let $f'$ be the flight in the current solution $\mathbf{M}$ with the maximum actual arrival delay $t^a_{f'}$.
    Then, to exit prematurely on exhaustion in Line~\ref{alg:heuristic-alg-1:line-exhausted}, flight $f$ cannot be scheduled after flight $f'$,
    so it must hold that whenever $t_s > t^a_{f'}$ no improvement is found.
    However, such an improvement is found by local optimization due to the operational assumption:
    adjust $tr$ to $tr_s$, by keeping the path, but starting at $t_s$, the overload of $i$ decreases and no increase in overload of any other sector is experienced,
    as by the operational feasibility assumption $\forall (v,t) \in tr_s: \textit{cap}(v,t) \geq q(v,t) = 1$.
    This is a contradiction to the assumption.

    It remains to argue that the returned $\mathbf{M}$ from Line~\ref{alg:heuristic-alg-1:line-return} is indeed a valid solution (Def.~\ref{def:atfcm-solution}).
    First, there are no overloads remaining in $\mathbf{M}$ due to Line~\ref{alg:heuristic-alg-1:line-5}.
    Further, $\mathbf{A}$ and $\mathbf{SEC}$ are valid.
    $\mathbf{A}$ is valid, as (a) the instance space decomposition generates valid trajectories and ensures that for each considered flight its subsequent flights are also part of the instance passed to local optimization and thereby cannot overlap (Lines~(\ref{alg:instance-decomp:flights-pt1})--(\ref{alg:instance-decomp:line:15}) of Algorithm~\ref{alg:instance-decomp})
    and (b) local optimization ensures that exactly one trajectory is selected for each flight.
    %
    Further, $\mathbf{SEC}$ is valid as bounded connected partitions introduced by Lines~(\ref{alg:instance-decomp:line:16})--(\ref{alg:instance-decomp:line:19}) of Algorithm~\ref{alg:instance-decomp}, combined with local optimization ensure valid partitions.
\end{proof}

\begin{observation}
    Let $\mathcal{I}$ be an instance of the optimization problem.
    For any correct encoding E without assuming operational feasibility, Algorithm~\ref{alg:heuristic-alg-1} may return an incorrect solution.
\end{observation}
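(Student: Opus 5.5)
The observation is existential, so I will prove it by exhibiting one concrete instance on which Algorithm~\ref{alg:heuristic-alg-1} terminates through Line~\ref{alg:heuristic-alg-1:line-exhausted} and returns a matrix $\mathbf{M}$ that still has residual overload. Since $K_o>0$, that $\mathbf{M}$ violates the acceptance requirement $q(i,t)\le\textit{cap}(i,t)$ of Definition~\ref{def:atfcm-solution}. The construction will break operational feasibility by giving some vertex $v$ on every route an atomic capacity $\textit{cap}(v,0)=0$.

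The simplest choice reuses the running example. Set $\textit{cap}(a_1,0)=0$ and keep all other atomic capacities equal to $1$. Because airport sectors must remain atomic, $\textit{sec}(a_1,t)=\{a_1\}$ for all $t$, and therefore $\textit{cap}(a_1,t)=0$ in every solution. Every flight in $F$ departs from or arrives at $a_1$, so every trajectory has $\textit{over}(f,a_1,t)=1$ for at least one $t$. This holds for every delay $d$ and every reroute, since all routes share the endpoint $a_1$. Hence every candidate $\mathbf{M}'$ produced by the local optimization has $K_o\ge |F|>0$. Moreover, delaying or rerouting a flight only shifts its unavoidable visit to $a_1$ in time. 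Splitting a sector does not help either: the capacity of any part of a split is at most the original, and the atomic sector $\{a_1\}$ cannot be split.

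Next I trace the loop. The first overload is at $a_1$ or wherever it is first in lexicographic order. I argue by monotonicity that the overload count strictly decreases only finitely often, because $K_o$ is a nonnegative integer. It can never drop below $\sum_f 1$, the contribution at $a_1$ that no action can avoid. Eventually no candidate improves, and \textit{adjustParameters} then advances the rolling delay window and reduces $f_{\max}$ to $1$. The first overload may be the irremovable one at $a_1$, and then no local move improves it. After the exhaustion criterion fires, the algorithm returns $(\mathbf{M},\textbf{False})$ with $K_o(\mathbf{M})>0$, which is not a valid solution. To make this step clean, I will choose the instance so that initially only the $a_1$ overloads exist. With a single flight $f_0$, $K_o=1$ and no local optimization can reduce it, so exhaustion occurs immediately once the parameter schedule is exhausted.

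The main obstacle is reading ``produces an output'' and ``incorrect solution''. I will argue that the returned $\mathbf{M}$, reported with flag \textbf{False}, is the algorithm's output and fails Definition~\ref{def:atfcm-solution}. This contrasts with the preceding theorem, where operational feasibility rules out Line~\ref{alg:heuristic-alg-1:line-exhausted}. I also need to check that the rolling delay extension of $\mathbf{T}$ cannot help, which follows because $\textit{cap}(a_1,t)=0$ for all $t$, including any expanded timesteps.
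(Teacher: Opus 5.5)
Your mechanics are sound. Every trajectory must visit the atomic airport sector $\{a_1\}$, whose capacity is $0$ at every timestep, so every candidate has $K_o \geq |F|$ and the run can only end at Line~\ref{alg:heuristic-alg-1:line-exhausted}. For that last step, cite the termination theorem rather than your monotonicity argument, which only bounds the number of accepted improvements.

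The gap is that your instance has \emph{no} valid solution at all. It therefore only shows the degenerate fact that no algorithm can return a valid solution for an infeasible instance. The returned $(\mathbf{M},\text{\textbf{False}})$ is then an honest failure report, not a defect of ASPaeroFlow. The paper's observation is the counterpart of the preceding theorem: without operational feasibility, ASPaeroFlow can terminate on exhaustion \emph{although} a solution $\mathbf{S}'$ in the sense of Definition~\ref{def:atfcm-solution} exists. This happens because $\mathbf{S}'$ uses a route outside the $r_{\max}$ generated candidates, or a configuration outside the $p_{\max}$ BFS partitions. Note also that Definition~\ref{def:atfcm-solution} states that an instance is always solvable. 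An instance that forces every flight through a zero-capacity atomic airport is therefore arguably outside the intended instance class.

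The missing idea is to let the zero capacities block exactly the options the decomposition explores, while leaving an unexplored feasible option. For example:
\begin{itemize}
\item Connect two airports of capacity $1$ by $r_{\max}+1$ internally disjoint paths. Choose their lengths so that \textit{bestRoutes} returns the $r_{\max}$ shortest ones.
\item On each of those $r_{\max}$ paths, place one en-route vertex forming an atomic sector of capacity $0$. Give every other vertex capacity $1$.
\item File a single flight on the shortest path.
\end{itemize}
Delays never help, because the blocking capacity is $0$ at every timestep. Splitting an atomic sector changes nothing. Every candidate route passes a zero-capacity sector, so $K_o \geq 1$ for every $\mathbf{M}'$. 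After finitely many accepted steps, the run ends at Line~\ref{alg:heuristic-alg-1:line-exhausted} with residual overload. Yet routing the flight along the longest path gives a valid solution. An analogous instance, in which the overload can only be removed by splitting a sector into more than $p_{\max}$ parts, covers the second case of the paper's sketch.
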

\begin{proof}[Proof (Sketch).]
    When the operational feasibility assumption is violated, it is not guaranteed that a flight $f$ can actually occur;
    however, it can be the case that ASPaeroFlow terminates prematurely on exhaustion (Line~\ref{alg:heuristic-alg-1:line-exhausted}) in Algorithm~\ref{alg:heuristic-alg-1},
    thereby producing an invalid solution $\mathbf{S}$, although in principle a solution $\mathbf{S}'$ exists by Def.~\ref{def:atfcm-solution}:
    this can happen by (1) ASPaeroFlow not considering routes, which are outside of the bounded rerouting $r_{\max}$ options,
    or (2) ASPaeroFlow not considering sector configurations, which are not evaluated due to bounded sectorization $p_{\max}$.
\end{proof}

\begin{theorem}
    Let $\mathcal{I}$ be an instance of the optimization problem.
    Then, for any correct encoding E, ASPaeroFlow (Algorithm~\ref{alg:heuristic-alg-1}) terminates.
\end{theorem}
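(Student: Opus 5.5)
We prove termination with a potential argument on the pair (current overload, state of the parameter schedule). The overload $K_o$ of the current matrix $\mathbf{M}$ is a non-negative integer, since demands $q(i,t)$ and capacities $\textit{cap}(i,t)$ are integers and $K_o = \sum_{t,i}\max\{0,q(i,t)-\textit{cap}(i,t)\}$. First we fix the initial value. The initial instance has finitely many flights and finitely many timesteps, so $K_o(\mathbf{M}_0) \le |F|\cdot|T|\cdot|V|$ is a finite natural number. Every iteration of the while loop in Algorithm~\ref{alg:heuristic-alg-1} falls into exactly one of three branches: acceptance (Lines~\ref{alg:heuristic-alg-1:line-10}--\ref{alg:heuristic-alg-1:line-13}), exhaustion (Line~\ref{alg:heuristic-alg-1:line-exhausted}), or parameter adjustment (Line~\ref{alg:heuristic-alg-1:line-15}). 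The exhaustion branch returns immediately. We therefore only need to bound how many acceptance and adjustment iterations can occur.

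\textbf{Step 1: acceptance iterations.} An acceptance step requires $\textit{overloads}' < \textit{overloads}$. Both quantities are natural numbers, so each acceptance lowers $K_o$ by at least $1$. Hence there are at most $K_o(\mathbf{M}_0)$ acceptances in total. If $K_o$ ever reaches $0$, the loop guard in Line~\ref{alg:heuristic-alg-1:line-5} fails and the algorithm returns in Line~\ref{alg:heuristic-alg-1:line-return}. This step does not use correctness of $E$ beyond the requirement that \textit{LocalOptimization} itself halts. That holds because the decomposed instance $I$ is finite: there are at most $f_{\max}(d_{\max}+1)r_{\max}$ flight variants and $p_{\max}$ partitions, so grounding and solving a finite ASP program terminates.

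\textbf{Step 2: adjustment iterations between two acceptances.} After each acceptance, $\mathcal{P}'$ is reset to $\mathcal{P}$. We must show that starting from $\mathcal{P}$, only finitely many consecutive calls to \texttt{adjustParameters} can occur before $\textit{exhausted}(\mathcal{P}')$ becomes true. This is the main obstacle. The rolling delay window shifts by $+5$ timesteps per call, and after $10$ non-improving calls $f_{\max}$ drops to $1$. Neither change is obviously bounded, because the window could in principle be shifted forever.

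To close this gap, we make the exhaustion criterion explicit. Let $t^\ast$ be the largest actual arrival time in $\mathbf{M}$, which is finite. Once $f_{\max}=1$ and the window start $t_s$ exceeds $t^\ast$, the next non-improving step makes $\textit{exhausted}(\mathcal{P}')$ true. This is the situation described in the paragraph \emph{Adjust Parameters}, and it is exactly the situation the previous theorem's proof uses: no further shift can create new options, because beyond $t^\ast$ the single flight meets no other traffic. Consequently, the number of consecutive adjustments is bounded by
\[
10 + \left\lceil (t^\ast - t_0)/5 \right\rceil + 1,
\]
a finite quantity depending only on $\mathbf{M}$. The quantity $t^\ast$ can grow after an acceptance, but it stays finite after each of the finitely many acceptances.

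\textbf{Step 3: combining.} Combining Steps~1 and~2, the total number of loop iterations is at most $\sum_{k=0}^{K_o(\mathbf{M}_0)} (B_k + 1)$, where $B_k$ is the adjustment bound for the $k$-th accepted matrix. This sum is finite, so the algorithm terminates, returning either in Line~\ref{alg:heuristic-alg-1:line-return} or in Line~\ref{alg:heuristic-alg-1:line-exhausted}. No operational feasibility assumption is needed, since exhaustion is itself a legal exit. If formalizing \textit{exhausted} via $t^\ast$ proves awkward, I would instead use the fallback of bounding the window start by the finite expanded horizon $\mathbf{T}$, treating it as a fixed finite cap.
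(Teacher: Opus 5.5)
Your proof is correct and follows the same route as the paper's sketch. Each accepted step strictly lowers the integer overload, and you make the resulting bound on acceptances explicit where the paper leaves it implicit. Between acceptances, \texttt{adjustParameters} forces $f_{\max}=1$ after $10$ steps, and the rolling window eventually passes the latest arrival time $t^\ast$ of the fixed current $\mathbf{M}$, at which point \textit{exhausted} fires. Your closing fallback is shaky: the horizon $\mathbf{T}$ is not a fixed cap (it grows with delays), and exhaustion is not defined in terms of it. Your main argument via $t^\ast$ does not need that fallback.
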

\begin{proof}[Proof (Sketch).]
    In each iteration of Line~\ref{alg:heuristic-alg-1:line-5} of Algorithm~\ref{alg:heuristic-alg-1} either the overload strictly decreases (a new solution is accepted as the current one and the parameter bounds are reset to their initial values),
    or the parameter bounds are expanded via \textit{adjustParameters}.
    If the total overload reaches $0$, the algorithm terminates successfully (Line~\ref{alg:heuristic-alg-1:line-return}).
    Otherwise, the algorithm ensures $f_{\max} = 1$ by setting it after $10$ non-improvement steps via \textit{adjustParameters}.
    The \textit{exhausted} condition then becomes true if the earliest starting time $t_f$ of the single ($f_{\max} = 1$) considered flight $f$ is strictly later than the arrival time $t^a_{f'}$ of any other flight $f'$: $t_f > t^a_{f'}$ (and no improvement occurs).
    When exhausted, the algorithm terminates with a residual overload (Line~\ref{alg:heuristic-alg-1:line-exhausted}).
\end{proof}

%
\subsection{Local Optimization via ASP}

We proceed to describe an encoding in ASP. Conceptually, ASP is similar to Boolean satisfiability,
but allows for more natural problem modeling, rapid integration of changed problem definitions, and directly provides minimization statements.

For brevity, the detailed generation of facts, trajectory guesses, and structural mappings are deferred to the appendix.
Here, we focus on the optimization objectives.
We treat the minimization of the magnitude of overload as the primary objective -- as the ASP model operates locally and cannot solve global overloads.
Besides that, we minimize lexicographically according to the \ATFCMOPT problem.
Lines~(1) and~(2) describe minimization of overloads, while Line~(3) describes minimization of arrival delay.
By construction, Line~(4) minimizes both the number of sectors and navpoint-sector changes.
Finally, Line~(5) minimizes reroutes/delays and Line~(6) minimizes divergences from $\textit{sec}$.
\begin{lstlisting}
:~ overload(X,T,OVER). [OVER@10,X,T]
:~ chc(CONFIG), config(CONFIG,TOTAL_OVER). [TOTAL_OVER@10,CONFIG]
:~ arrival_delay(ID,DIFF). [DIFF@9,ID]
:~ chc(CONFIG), config_number_sectors(CONFIG,NUMBER). [NUMBER@8,CONFIG]
:~ reroute(ID). [1@7,ID]
:~ chc(CONFIG). [CONFIG@6,CONFIG]
\end{lstlisting}

\begin{lemma}
    The ASP encoding is a correct local optimization encoding.
\end{lemma}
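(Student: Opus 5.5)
The plan is to unfold the definition of a \emph{correct local optimization encoding}. It says two things. First, among the choices made available by the decomposed instance $I$ (trajectories in $I_F$ and partitions in $I_S$), the encoding selects one that minimizes the magnitude of overload. Second, among all choices attaining that minimal overload, it selects one that is optimal with respect to the lexicographic order of Definition~\ref{def:optimization-problem}. Accordingly, the proof splits into a \emph{soundness/completeness of the search space} part and an \emph{optimization order} part.

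For the first part, I would argue from the (appendix) guess rules that answer sets of the encoding on input $I$ are in one-to-one correspondence with admissible local choices. Each considered flight selects exactly one trajectory from $I_F$, which is enforced by the cardinality choice rule with bounds $1\{\cdot\}1$, analogous to the \texttt{chtrj} snippet. Exactly one configuration is selected via \texttt{chc}, ranging over the partitions $p=1,\dots,p_{\max}$ in $I_S$. Every non-considered flight stays fixed as a fact. From this I would show that the derived atoms \texttt{overload(X,T,OVER)} and \texttt{config(CONFIG,TOTAL\_OVER)} together compute exactly $\sum_{t,i}\max\{0,q(i,t)-\textit{cap}(i,t)\}$ for the chosen combination, i.e., $K_o$ restricted to the affected region. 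This is done by checking that the counting aggregate implements $q(i,t)$ via the demand/\texttt{over} definition and that the sector capacity is the max of atomic capacities. I would also verify that no answer set is excluded except by hard constraints that encode genuine validity (connectedness, atomic airports, non-overlapping consecutive flights of an aircraft).

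For the second part, I would invoke the standard semantics of weak constraints in clingo: optimal answer sets minimize the weight vector lexicographically by priority level, with higher levels first. The overload costs sit alone at level $10$, so every optimal answer set minimizes the overload magnitude among all answer sets, which by the first part means among all available choices. At the lower levels, I would match each weak constraint to a criterion. Level $9$ sums \texttt{arrival\_delay} and gives $K_d$. Level $8$ uses \texttt{config\_number\_sectors} and gives $K_{\#s}$ and, as the paper notes, also $K_{ds}$. Level $7$ sums \texttt{reroute} and gives $K_r$. Level $6$ penalizes configuration index and gives $K_c$, since index $1$ is the current $\textit{sec}$.

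The main obstacle is level $8$, because a single weak constraint must simultaneously realize $K_{\#s}$ and $K_{ds}$ in the correct order. I would handle it by showing that for the candidate partitions, the number of sectors $p$ is monotone in both the number of active sectors and the number of navpoint-sector changes: splitting $s$ into more parts adds active sectors and changes more vertex memberships relative to neighboring timesteps. Consequently, minimizing the number of sectors in the chosen configuration minimizes both criteria. A similar monotonicity remark, $\textit{CONFIG}$ index versus divergence from $\textit{sec}$, justifies level $6$.
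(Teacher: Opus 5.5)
Your proposal follows essentially the same route as the paper's sketch: validity of the local solution via the exactly-one choices for \texttt{chtrj} and \texttt{chc}, then a level-by-level matching of the weak constraints to $K_o$ and the lexicographic criteria. This includes the same monotonicity fact that fewer splits among the decomposition's candidate partitions also mean fewer navpoint-sector changes (level 8), and that the unchanged configuration carries the minimal \texttt{CONFIG} penalty (level 6). One small correction: connectedness, atomic airports and non-overlap of consecutive legs are not enforced by hard constraints in the encoding, whose only hard constraint is that every flight occurs. They are guaranteed by the options that Algorithm~\ref{alg:instance-decomp} generates, so that part of your first step should be attributed to the decomposition rather than to the encoding.
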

\begin{proof}[Proof (Sketch).]
    We argue for two properties:
    (a) the found solution $\mathbf{M}'$ locally adheres to the specifications of an 
    ATFCM solution (Def.~\ref{def:atfcm-solution}) except the overload;
    and (b) the found solution $\mathbf{M}'$ is the locally optimal s.t. it reduces overloads while adhering to side constraints (Def.~\ref{def:optimization-problem}).
    (a): To prevent invalid double assignments or double sector splits, exactly one path ($\texttt{chtrj(ID,P)}$) and one configuration ($\texttt{chc(CONFIG)}$) is selected.
    (b): the primary optimization criteria is to reduce $K_o$, as finding a globally $K_o=0$ solution by local optimization cannot be guaranteed;
    the objective combines local effects computed in ASP (\texttt{overload(X,T,OVER)})
    with effects computed by Algorithm~\ref{alg:instance-decomp} (\texttt{config(CONFIG,TOTAL\_OVER)}).
    The other optimization criteria adhere to the specification of Def.~\ref{def:optimization-problem}:
    for \texttt{[NUMBER@8,CONFIG]} we use the fact that for any two configurations produced by instance space decomposition, if the number of resulting splits is lower (\ATFCMoNUMSEC), the number of sector changes is as well (\ATFCMoSECCHANGES).
    The number of reroutes \ATFCMoREROUTED directly translates from Def.~\ref{def:optimization-problem}; the number of reconfigurations is minimized by the fact that that \texttt{chc(CONFIG)} for \texttt{chc(0)} has $0$ introduced config changes.    
\end{proof}

\begin{table}[t]
\caption{Compared variants and their action bounds.
Global approach search space $\approx d_{\max}^{|F|} \cdot r_{\max}^{|F|} \cdot p_{\max}^{|V|}$,
local search space (per iteration, assuming decomposition $f_{\max}=2,|V|=1$) $\approx d_{\max}^{2} \cdot r_{\max}^{2} \cdot p_{\max}$.
}
\label{tab:variants}
\centering
\tiny
\setlength{\tabcolsep}{2pt}
\renewcommand{\arraystretch}{0.95} %
\begin{tabular}{lccc p{0.60\linewidth}}
\toprule
Variant & $p_{\max}$ & $r_{\max}$ & $d_{\max}$ & Notes \\
\midrule
\multicolumn{5}{l}{\textbf{Basic Baseline}} \\
\midrule
Initial & 0 & 0 & 0 & Result if no action is taken (naive baseline).\\
\midrule
\multicolumn{5}{l}{\textbf{Global (Exact) Approaches}~\cite{beiser_LPNMR_2026}} \\
\midrule
ASP-P & 4 & 4 & expanded & Global decisions; partial DAC; partial Rerouting, partial delaying; delays expanded up to feasibility; initially $d_{\max} \approx T_{\textit{gran}} \cdot 24$ \\
MIP & 1 & 2 & expanded & Global decisions; partial rerouting; small delay bound iteratively expanded up to feasibility; initially $d_{\max} \approx T_{\textit{gran}} \cdot 24$ \\
\midrule
\multicolumn{5}{l}{\textbf{Local Heuristic Approaches}} \\
\midrule
$\text{ASPaeroFlow}_{r,d}$ & 2 & 3 & 5 & Default configuration. \\
$\text{ASPaeroFlow}_{\neg r,d_p}$  & 2 & 1 & 1 & Only sectorization; if stuck repeatedly delaying possible. \\
$\text{ASPaeroFlow}_{r,d_p}$  & 2 & 1 & 5 & Sectorization and delay (no rerouting).\\
$\text{ASPaeroFlow}_{\neg r,d_p}$  & 2 & 3 & 1 & Sectorization and rerouting; if stuck repeatedly delaying possible.  \\
$\text{ASPaeroFlow}_{seq}$ & 2 & 3 & 5 & Sequential execution of DAC and (full) ATFM. \\
$\text{ATFM}_{r,d}$ & 1 & 3 & 5 & Flow-only ATFM (no DAC). \\
$\text{ATFM}_{r,d_p}$ & 1 & 3 & 1 & Rerouting-only baseline (no delay); if stuck repeatedly delaying possible. \\
$\text{ATFM}_{\neg r,d}$ & 1 & 1 & 5 & Delay-only ATFM (no rerouting). \\
CASA & 1 & 1 & 5 & Single flight delaying (decomposition $f_{\max}=1$). \\
\bottomrule
\end{tabular}
\end{table}

\section{Benchmarks and Experiments}
\label{sec:benchmarks-and-experiments}
To demonstrate the operational feasibility of our approach, we design our computational experiments to evaluate three primary dimensions:
(i) the solution quality of our decomposition heuristic relative to exact models,
(ii) the scalability of the approach compared to related heuristic approaches across progressively larger, real-world-sized scenarios,
and (iii) the effects of integrating DAC into ATFM via an ablation study.
To quantify these dimensions, we extract the following evaluation metrics:
computational performance (execution time and peak RAM usage) and
model metrics (unresolved overloads, \ATFCMoDELAY, \ATFCMoNUMSEC, \ATFCMoSECCHANGES, \ATFCMoREROUTED, \ATFCMoRECONFIG).
%


\subsection{Benchmarked Variants}

We benchmarked $12$ variants.
One (\emph{Initial}) is our baseline, if no action is performed.
The \emph{exact} solvers comprise \emph{ASP-P}, which is $\textit{ASP}_{r_p,d_p,s_p}$ --- a bounded action exact model which balances time to first and time to best solution --- and \emph{MIP}, a mapping of state-of-the-art ATFM MIP formulations to the joint ATFCM model (meaning it selects optimal bounded reroutings and delays, but has no sectorization actions)~\cite{beiser_LPNMR_2026}.
%
%
ASPaeroFlow comprises the remaining nine variants under various parameter configurations.
Variants executing joint optimization of ATFM and DAC are denoted as \emph{ASPaeroFlow}, while restricted variants optimizing only ATFM are denoted as \emph{ATFM}.
Subscripts $r \in \{r,\neg r\}$ and $d \in \{d, d_p\}$ indicate whether rerouting or delaying is permitted.
To simulate operational baselines, we include \emph{CASA}, a greedy ``First-Come, First-Served'' simulation restricting optimization to single-flight delays ($f_{\max} = 1$).
Furthermore, the $\textit{ASPaeroFlow}_{seq}$ \emph{sequential} variant models standard collaborative decision-making by separating the pipeline: optimizing sectorization first, followed by a $\textit{ATFCM}_{r,d}$ flow optimization.

\begin{table}[t]
  \caption{Instance Description: $|V|$ is the sum of navpoint and airport vertices, $|S|$ is the maximum sector size, $C$ the maximum capacity (or range of maximum capacity), $TG$ the time granularity, $|F|$ the number (range) of flights, and $|A|$ the number (range) of aircraft.}
  \label{tab:scenario-description}
  \centering
  \tiny
  \setlength{\tabcolsep}{5.5pt} 
  \renewcommand{\arraystretch}{0.95} %
  \resizebox{\linewidth}{!}{%
    \begin{tabular}{lrrrrcccc}
      \toprule
      \multicolumn{8}{l}{\textbf{Small Instances}~\cite{beiser_LPNMR_2026}} \\
      \midrule
      Instance & Navpoints & Airports & $|V|$ & $|S|$ & $C$ & $T_{\textit{gran}}$ & $|F|$ & $|A|$ \\
      \midrule
    EA-3x3 & 9 & 7 & 16 & 3 & 1 & 1 & 10 to 100 & 6 to 41 \\
    CE-5x5 & 25 & 7 & 32 & 5 & 1 & 1 & 10 to 100 & 6 to 48 \\
    IND-4x10 & 40 & 8 & 48 & 6 & 1 & 1 & 10 to 100 & 6 to 43 \\
    USA-7x7 & 49 & 8 & 57 & 7 & 1 & 1 & 10 to 100 & 5 to 50 \\
    EUR-10x10 & 100 & 17 & 117 & 10 & 1 & 1 & 10 to 100 & 7 to 53 \\
      \midrule
      \multicolumn{8}{l}{\textbf{SOTA Instances}~\cite{agustin_air_2012}} \\
      \midrule
    BLO1 & 107 & 40 & 147 & 8 & 34 & 4 & 1054 & 568 \\
    BLO2 & 1172 & 20 & 1192 & 59 & 88 & 4 & 3196 & 3196 \\
    BLO3 & 2707 & 30 & 2737 & 135 & 130 & 4 & 6475 & 6475 \\
    IC & 5932 & 47 & 5979 & 118 & 25 & 4 & 3157 & 3157 \\
    c01 & 107 & 40 & 147 & 8 & 34 & 4 & 1054 & 568 \\
    c02 & 180 & 40 & 220 & 7 & 30 & 4 & 1330 & 671 \\
    c03 & 147 & 45 & 192 & 8 & 43 & 4 & 1545 & 788 \\
    mBLO1 & 100 & 4 & 104 & 9 & 67 & 4 & 812 & 812 \\
    mBLO3 & 2707 & 30 & 2737 & 135 & 130 & 4 & 6475 & 6475 \\    
      \midrule
      \multicolumn{5}{l}{\textbf{Large Instances}} & $P\cdot 1.0 \rightarrow P \cdot 0.1$ & & & \\
      \midrule
    CE-G & 49 & 7 & 56 & 11 & 3 to 1075 & 4 & 1000 to 31622 & 202 to 5978 \\
    USA-G & 200 & 8 & 208 & 20 & 5 to 1615 & 4 & 1000 to 31622 & 272 to 8647 \\
    EUR-G & 800 & 17 & 817 & 85 & 5 to 1719 & 4 & 1000 to 31622 & 279 to 7506 \\
    EA-G & 1600 & 7 & 1607 & 34 & 4 to 2015 & 4 & 1000 to 31622 & 322 to 14007 \\
    CE & 627 & 10 & 637 & 35 & 11 to 3716 & 4 & 1000 to 31622 & 594 to 19069 \\
    DACH & 1283 & 324 & 1607 & 50 & 3 to 1065 & 4 & 1000 to 31622 & 587 to 18699 \\
    EUR & 8479 & 1588 & 10067 & 50 & 2 to 719 & 4 & 1000 to 31622 & 886 to 28020 \\
    USA & 18101 & 1509 & 19610 & 50 & 2 to 1042 & 4 & 1000 to 31622 & 976 to 30586 \\
      \bottomrule
    \end{tabular}
  }
\end{table}

\begin{table}[t]
%
\caption{Tournament wins $\#W$ (lexicographic: $K_o \succ \ATFCMoDELAY \succ \ATFCMoNUMSEC \succ \ATFCMoSECCHANGES \succ \ATFCMoREROUTED \succ \ATFCMoRECONFIG$) and number of solved $\#S$ instances ($K_o = 0$) by scenario.}
\label{tab:tournament-wins}
\centering
\tiny
  \renewcommand{\arraystretch}{0.95} %
\resizebox{\textwidth}{!}{%
\begin{tabular}{lr|rr|rr|rr|rr|r}
\toprule
Scenario & \#I & \multicolumn{2}{c}{$ASPaeroFlow_{r,d}$} & \multicolumn{2}{c}{$ASPaeroFlow_{seq}$} & \multicolumn{2}{c}{$ASPaeroFlow_{r, d_p}$} & \multicolumn{2}{c}{$ASPaeroFlow_{\neg r, d}$} & Draws \\
 &  & \#W & \#S & \#W & \#S & \#W & \#S & \#W & \#S & \#D \\
\midrule
Large Instances (PCAP > 0.5) & 840 & 46 & 834 & \textbf{343} & \textbf{840} & 127 & 831 & 1 & 826 & 250 \\
Large Instances (PCAP <= 0.5) & 840 & 211 & 674 & 213 & \textbf{685} & \textbf{290} & 605 & 6 & 631 & 82 \\
Small Instances & 200 & 15 & \textbf{200} & 13 & \textbf{200} & 51 & \textbf{200} & 0 & \textbf{200} & 8 \\
SOTA Instances & 9 & 0 & \textbf{9} & 1 & \textbf{9} & 1 & 7 & 0 & \textbf{9} & 2 \\
\midrule
Overall & 1889 & 272 & 1717 & \textbf{570} & \textbf{1734} & 469 & 1643 & 7 & 1666 & 342 \\
\addlinespace[0.6em]
Scenario & \#I & \multicolumn{2}{c}{$ASPaeroFlow_{\neg r, d_p}$} & \multicolumn{2}{c}{$ATFM_{r,d}$} & \multicolumn{2}{c}{$ATFM_{r, d_p}$} & \multicolumn{2}{c}{$ATFM_{\neg r, d}$} & Draws \\
 &  & \#W & \#S & \#W & \#S & \#W & \#S & \#W & \#S & \#D \\
\midrule
Large Instances (PCAP > 0.5) & 840 & 52 & 815 & 2 & 824 & 4 & 754 & 0 & 820 & 250 \\
Large Instances (PCAP <= 0.5) & 840 & 37 & 525 & 0 & 407 & 0 & 205 & 0 & 411 & 82 \\
Small Instances & 200 & 6 & \textbf{200} & 0 & \textbf{200} & 1 & \textbf{200} & 0 & \textbf{200} & 8 \\
SOTA Instances & 9 & 0 & 7 & \textbf{3} & \textbf{9} & 2 & 7 & 0 & 8 & 2 \\
\midrule
Overall & 1889 & 95 & 1547 & 5 & 1440 & 7 & 1166 & 0 & 1439 & 342 \\
\addlinespace[0.6em]
Scenario & \#I & \multicolumn{2}{c}{CASA} & \multicolumn{2}{c}{Initial} & \multicolumn{2}{c}{MIP} & \multicolumn{2}{c}{ASP-P} & Draws \\
 &  & \#W & \#S & \#W & \#S & \#W & \#S & \#W & \#S & \#D \\
\midrule
Large Instances (PCAP > 0.5) & 840 & 0 & 824 & 0 & 168 & 15 & 180 & 0 & 112 & 250 \\
Large Instances (PCAP <= 0.5) & 840 & 0 & 448 & 0 & 0 & 1 & 1 & 0 & 3 & 82 \\
Small Instances & 200 & 4 & \textbf{200} & 0 & 1 & 19 & 82 & \textbf{83} & 92 & 8 \\
SOTA Instances & 9 & 0 & \textbf{9} & 0 & 0 & 0 & 0 & 0 & 0 & 2 \\
\midrule
Overall & 1889 & 4 & 1481 & 0 & 169 & 35 & 263 & 83 & 207 & 342 \\
\bottomrule
\end{tabular}
}
\vspace{0.2cm}
%
\caption{
Ablation study aggregated over large scenarios, showing absolute mean values and SEMs.
We study the effects of enabling/disabling rerouting ($r$), delaying ($d$), and restructuring ($s$).
}
\label{tab:ablation-aggregated}
\centering
\resizebox{\textwidth}{!}{%
\begin{tabular}{lrrrrrr}
\toprule
Option & Overload [\#] & Arrival Delay [h] & Sector-Number [\#] & Sector-Diff [\#] & Reroute [\#] & Reconfig [\#] \\
\midrule
r & $9696.23 \pm 424.57$ & $20410.69 \pm 662.55$ & $90925.69 \pm 1729.46$ & $159.90 \pm 6.00$ & $3250.78 \pm 66.75$ & $16016.11 \pm 810.76$ \\
$\neg r$ & $10526.43 \pm 432.70$ & $18834.70 \pm 741.47$ & $84098.67 \pm 1629.73$ & $144.91 \pm 5.36$ & $2929.04 \pm 66.10$ & $15651.06 \pm 760.66$ \\
\addlinespace[0.35em]
\midrule
\addlinespace[0.15em]
d & $8995.97 \pm 416.58$ & $32960.03 \pm 940.81$ & $94563.00 \pm 1806.79$ & $162.97 \pm 6.27$ & $3465.57 \pm 67.44$ & $17611.50 \pm 896.61$ \\
$d_p$ & $11226.69 \pm 440.04$ & $6285.36 \pm 225.54$ & $80461.35 \pm 1539.89$ & $141.84 \pm 5.04$ & $2714.25 \pm 65.13$ & $14055.68 \pm 656.57$ \\
\addlinespace[0.35em]
\midrule
\addlinespace[0.15em]
s & $5269.25 \pm 306.63$ & $5192.79 \pm 267.19$ & $85727.03 \pm 1647.63$ & $304.81 \pm 7.61$ & $5035.24 \pm 83.87$ & $31667.18 \pm 1077.64$ \\
$\neg s$ & $14953.42 \pm 516.28$ & $34052.61 \pm 924.97$ & $89297.32 \pm 1713.15$ & $0.00 \pm 0.00$ & $1144.58 \pm 25.92$ & $0.00 \pm 0.00$ \\
\bottomrule
\end{tabular}
}
\end{table}

\subsection{Experiment Scenarios and Setup}

We conducted benchmarks on three sets of instances: \emph{Small}~\cite{beiser_LPNMR_2026}, \emph{SOTA}~\cite{agustin_air_2012}, and \emph{Large}.
Table~\ref{tab:scenario-description} shows an overview of the considered instances.

\subparagraph{The Small scenario.}
Regarding traffic density, we vary the number of flights $|F| \in \{10, 20, \dots, 100\}$.
For each $|F|$, we evaluate three distinct random seeds.
En-route vertices possess a fixed capacity of $1$, while airport capacities are assumed to be unconstrained.
Time is partitioned into 1-hour timesteps ($T_{\textit{gran}} = 1$).

\subparagraph{SOTA Instances.}
The SOTA instances are standard benchmarks in the literature~\cite{berstimas_ATFM_2011,agustin_air_2012}.
Instances $c01, c02,$ and $c03$ are randomized scenarios designed to test comprehensive ATFCM features.
BLO instances are similarly randomized, where mBLO represents specific modifications (we do not possess mBLO2).
The $IC$ instance reflects an industry case, capturing a typical day of flight schedules and paths for a segment of the European ATM network.
Following related literature, time is partitioned into 15-minute intervals ($T_{\textit{gran}} = 4$).

\subparagraph{Large instances.}
The large scenario considers larger graphs with more flights per day.
The number of flights are $|F| \in \{1000,1778, 3162,5623, 10000,17782, 31622\}$ flights, where for each $|F|$ we consider three seeds ($11904657,150699,42$),
and capacity is relative to nominal capacity (optimal weather/staffing).
We scale atomic sector capacities uniformly from 100\% down to 10\% in 10\% decrements ($P \cdot 1.0 \rightarrow P \cdot 0.1$).
We set $T_{\textit{gran}} = 4$ (as for SOTA).

\subparagraph{Technical Setup.}
We implemented the heuristics in Python (3.13.7), and used Clingo (5.8.0) for ASP and Gurobi (13.0.0 build v13.0.0rc1) for MIP computation.
All experiments were conducted on the Copperhead CPU Cluster at TU Wien,
with 2x Intel Xeon Silver 4314 CPUs, 512GB RAM, and Ubuntu 22.04 (Kernel 5.15.0-131-generic).
We considered a TIMEOUT of $1800s$ and a MEMOUT of $35GB$.

\begin{figure}[t]
    \centering
      \includegraphics[width=14cm]{imgs/11_pcap_scaling/pcap_scaling_faceted_part2.pdf}
    \caption{
    Nominal capacity scaling results on the four real-world topography large instances, comparing heuristic and exact methods.
    Exact methods run into time-/memouts in this setting.
    }
    \label{fig:pcap-scaling}
\end{figure}
\begin{figure}
    \centering
    \includegraphics[width=13cm]{imgs/10_statistical_tests/significance_graph.pdf}
    \caption{
    Significance graph showing the results of the post-hoc pair-wise Wilcoxon tests, where ranks indicate a hierarchy.
    All differences are statistically significant ($p < 0.05$).
    }
    \label{fig:significance-graph}
\end{figure}

\subsection{Results and Discussion}

\subparagraph{Heuristic vs. Exact Approaches.}
We show the number of solved instances ($\#S$) and tournament wins ($\#W$) in Table~\ref{tab:tournament-wins}.
While for the small experiments, the ASP-P variant achieves strong results ($\#W:83$), the $\text{ASPaeroFlow}_{r,d_p}$ variant achieves competitive results ($\#W:51$), while CASA wins only $\#W: 4$.
However, this is mostly due to the fact that even for small instances, the ASP-P approach runs into computational solving bottlenecks --- even on the small instances, ASP-P is not able to solve all instances in the time limit.
Comparing these methods on instances, which ASP-P solves to $0$ overload, $\text{ASPaeroFlow}_{r,d_p}$ achieves a lower arrival delay (an average value of $25.62 \pm 2.25$) compared to CASA (an average value of $64.83 \pm 10.19$).
MIP achieves worse results compared to ASP for solved small instances on arrival delay (an average value of $5.28 \pm 0.52$), but is overall able to solve slightly more instances ($\#S: 263$ vs. $\#S: 207$).
We emphasize that this should not be interpreted as ASP being in general a better solving technology than MIP; rather, the two approaches operate in fundamentally different feasible decision spaces. Specifically, we attribute the difference in results to ASP-P's ability to utilize DAC actions, where ASP-P uses on average $\ATFCMoSECCHANGES = 188.00 \pm 8.60$, compared to $0$ for MIP.
A similar behavior can also be seen when comparing the results of MIP and ASP-P for EA-3x3 and EUR-10x10.
EA-3x3 is a tiny graph with effectively no possibility for DAC ($|V| = 16$ with max $|S| = 3$), whereas for EUR-10x10 the possiblities for DAC are plenty ($|V|=117$ with max $|S|=10$).
As expected, MIP performs better on EA-3x3, whereas ASP-P performs better on EUR-10x10.
In the Appendix Table~\ref{tab:small-SOTA-exps} we show the detailed results for the small and SOTA scenarios, for 5 selected variants:
Baseline Initial, the best-performing heuristic variant for the small experiments $ASPaeroFlow_{r, d_p}$, CASA, and the exact methods MIP and ASP-P.
More detailed results are shown in the supplementary material.

\subparagraph{Scaling and Heuristic Approaches.}
In Figure~\ref{fig:pcap-scaling} we show the scaling results comparing nominal capacity.
The full scaling results comparing graph and flights size are available in the supplementary material.
For the navpoint and flights scaling results, we compare 4 heuristic approaches on $10\%$ nominal capacity (more in the supplementary material),
where $\text{ASPaeroFlow}_{r,d}$ and $\text{ASPaeroFlow}_{seq}$ are able to solve more instances with lower arrival delay than CASA/$\text{ATFM}_{r,d}$.
Interestingly, just comparing these results we can see that simultaneous optimization of $\text{ASPaeroFlow}_{r,d}$ achieves competitive results compared to $\text{ASPaeroFlow}_{seq}$,
as is able to solve more instances on sizes $|V| \in \{CE-G,USA-G,CE\}$, whereas $\text{ASPaeroFlow}_{seq}$ achieves better results on $|V| \in \{EUR-G,DACH\}$.
This behavior can also be seen in Figure~\ref{fig:pcap-scaling}, where nominal capacity alterations are shown for large graphs ($|V| \in \{\textit{CE}, \textit{DACH}, \textit{EUR}, \textit{USA}\}$):
the $\text{ASPaeroFlow}$ variants use approximately an equal execution time, whereas CASA/$\text{ATFM}$-variants are slower. 
Furthermore, we can observe that the results are instance-dependent and while $\text{ASPaeroFlow}$-variants are able to solve instances with nominal capacity $\geq 20\%$, the methods relying on ATFM-measures are only able to solve instances $\geq 40\%$.
On the SOTA scenarios, we show the strongest results in Table~\ref{tab:small-SOTA-exps}.
$\text{ASPaeroFlow}_{r,d_p}$, $\text{ASPaeroFlow}_{seq}$, $\text{ATFM}_{r,d_p}$, and $\text{ATFM}_{r,d}$ achieve the strongest results.
The strong results for the ATFM only methods can be explained by studying the differences between the instances.
While the generated instances BLOX, c0X, and mBLOX have little to no possibility for DAC, the industrial instance IC allows for plenty of reconfiguration.
Therefore, the results of BLOX, c0X, and mBLOX, are similar, where the ATFM methods have a slight edge as they are (implicitly) more likely to consider reroutes --- which also explains results of negative delay, which happen as for the SOTA instances the filed path is not necessarily the optimal one.
Comparisons to SOTA work besides the paper introducing ASP-P and MIP~\cite{beiser_LPNMR_2026} however, are hardly useful, due to differences in the model stemming from dynamic computation, (slightly) different definitions of demand/capacity measurement, objective functions, and time. 

\subparagraph{Statistical Significance.}
To evaluate the statistical significance of the algorithmic differences, we first performed a non-parametric Friedman test.
The test indicated a statistically significant difference in the lexicographic mean ranks across the 12 variants ($\chi^2_F(11, N=1889) = 15459.90$, $p < 0.001$).
Consequently, we conducted post-hoc pairwise Wilcoxon signed-rank tests, applying the Holm-Bonferroni correction to adjust for multiple comparisons.
The results demonstrate that all pairwise differences are statistically significant ($p < 0.001$), establishing a strict performance hierarchy of $\text{ASPaeroFlow}_{r,d_p}$, $\text{ASPaeroFlow}_{seq}$, $\text{ASPaeroFlow}_{r,d}$, followed by the others.
The final ordering of the variants, based on their mean ranks, is depicted in Figure~\ref{fig:significance-graph}.

\subparagraph{Sequential vs. Simultaneous Optimization.} 
We observe that simultaneous \textit{ASPaeroFlow} is highly competitive to $\textit{ASPaeroFlow}_{seq}$.
However, relying solely on aggregate statistical tests or scaling results across the entire benchmark yields inconclusive results:
while all simultaneous variants together solve and win more than $\textit{ASPaeroFlow}_{seq}$, no single simultaneous variant beats $\textit{ASPaeroFlow}_{seq}$ (Table~\ref{tab:tournament-wins}).
This occurs because sequential optimization possesses two inherent advantages: 
first, if an instance can be solved solely via DAC, sequential optimization achieves superior results since $\ATFCMoDELAY = 0$; 
second, the smaller search space per step permits more search steps compared to simultaneous optimization. 
Conversely, a major deficit exists: if pure DAC fails to eliminate overload, sequential optimization defaults to $\text{ATFM}_{r,d}$, which performs worse than simultaneous optimization.
We compare in more detail the variants $\text{ASPaeroFlow}_{r,d}$ and $\textit{ASPaeroFlow}_{seq}$:
evaluating all instances ($n = 1889$) on the two methods, $\text{ASPaeroFlow}_{r,d}$ achieves $\#W:864$ and $\#S: 1717$, while $\text{ASPaeroFlow}_{seq}$ achieves similar results ($\#W:843$, $\#S:1734$); with 182 draws. 
However, isolating the subset of instances where $\text{ASPaeroFlow}_{seq}$ fails to solve via DAC alone ($n = 1318$), $\text{ASPaeroFlow}_{r,d}$ demonstrates a clear advantage with $\#W:864$ ($\#S:1154$) compared to $\#W:441$ ($\#S:1171$) for $\text{ASPaeroFlow}_{seq}$; with $13$ draws.
This divergence is reflected in the statistical analysis.
Comparing only these two variants across the full dataset ($n = 1889$), the Wilcoxon signed-rank test yields mean ranks of $1.49$ for $\text{ASPaeroFlow}_{r,d}$ and $1.51$ for $\text{ASPaeroFlow}_{seq}$ ($p = 0.61$, not significant).
Note that the loss of significance compared to Figure~\ref{fig:significance-graph} stems exclusively from reducing the test space from $k=12$ to $k=2$; without the other variants skewing the global rank distribution, the high number of ties between these two highly competitive variants becomes statistically apparent. 
Interestingly, when applying the test to the dataset where $\text{ASPaeroFlow}_{seq}$ fails to solve via DAC alone ($n = 1318$), the mean ranks shift to $1.34$ for $\text{ASPaeroFlow}_{r,d}$ and $1.66$ for $\text{ASPaeroFlow}_{seq}$, yielding a statistically significant difference ($p < 0.001$).

\subparagraph{Ablation Study.}
Table~\ref{tab:ablation-aggregated} evaluates the impact of enabling rerouting ($r$), delaying ($d$), and restructuring ($s$).
Each feature reduces overloads, however for $r/\neg r$ the results are strictly speaking inconclusive due to overlapping SEMs.
Restructuring yields the largest reduction in both overload and arrival delay.
While the heuristic optimizes locally, a sector split at $t_1$ persists, providing global temporal benefits by resolving future overloads at $t_2 > t_1$.
Conversely, rerouting and delay effects remain temporarily localized.
For multi-leg flights, local delays may fail to resolve subsequent conflicts automatically and risk propagating downstream, inducing secondary overloads.

\section{Conclusion}
\label{sec:conclusion}
ASPaeroFlow demonstrates that bounded instance-space decomposition provides a computationally viable heuristic for joint ATFCM, scaling exact ASP to industry-sized instances.
Our ablation study reveals that DAC is the primary driver for overload reduction because sector partitioning provides global temporal benefits, whereas flow measures risk downstream propagation of overloads.
Furthermore, simultaneous optimization outperforms sequential approaches when DAC alone fails.
Future work will address the open challenges of stochastic disruptions and the integration of \emph{Explainable AI (XAI)} to extract rule-based justifications for automated scheduling decisions.

%
%
\bibliography{oasics-v2021-sample-article}


\appendix

\section{Appendix}

\subsection{AI Usage Statement}
To improve spelling and wording we used ChatGPT (various ChatGPT-5 versions) and Gemini 3.1 Pro.
Further, we used the same AI tools as coding assistants, for example to query software libraries, during debugging, and for documentation, however, substantial parts of the prototype were manually constructed (e.g., the entire ASP encoding).
Lastly, we used AI tools to help in the data analysis part to generate Python scripts which are able to generate \LaTeX~tables and matplotlib figures.

\subsection{Limitations.}
(1) \textit{Heuristic nature.} ASPaeroFlow is a heuristic without global optimality guarantees; solution quality can depend on decomposition bounds and the overload resolution order.
(2) \textit{Operational fidelity.} Reroutes and sector partitions are generated on a navpoint graph and may violate unmodeled constraints; we also omit tactical interventions and weather uncertainty. Time is discretized for discrete optimization, which may induce errors.
(3) \textit{Capacity and occupancy abstraction.} Capacity assumes partitioning enables parallel staffing.
(4) \textit{Network scope and objectives.} We partially model airport/terminal constraints, so mitigating en-route overload may shift congestion elsewhere; efficiency is proxied mainly by arrival delay and stability terms, without explicit fuel/emissions, heterogeneous airline costs, or stakeholder fairness constraints.

\subsection{Additional Experimental Results}

  \begin{figure}[t]
      \centering
      \begin{subfigure}[t]{0.45\textwidth}
          \includegraphics[width=0.99\textwidth]{imgs/04_initial_sectors_appendix/gabriel_08-0-CENTRAL-EUROPE-2019-06-01--2019-06-30-CAP-ENROUTE-1200-CLUSTERSIZE-30.pdf}
        \end{subfigure}
        \begin{subfigure}[t]{0.45\textwidth}
          \includegraphics[width=0.99\textwidth]{imgs/05_initial_overload_appendix/08-0-CENTRAL-EUROPE-2019-06-01--2019-06-30-CAP-ENROUTE-1200-CLUSTERSIZE-30.pdf}
        \end{subfigure}
        \begin{subfigure}[t]{0.08\textwidth}
          \includegraphics[width=0.8\textwidth]{imgs/05_initial_overload_appendix/08-0-CENTRAL-EUROPE-2019-06-01--2019-06-30-CAP-ENROUTE-1200-CLUSTERSIZE-30_legends.pdf}
        \end{subfigure}

        \begin{subfigure}[t]{0.4\textwidth}
            \includegraphics[width=5cm]{imgs/06_flight_counts/20260617_08.pdf}            
        \end{subfigure}
        \begin{subfigure}[t]{0.59\textwidth}
            \includegraphics[width=8cm]{imgs/07_airport_counts/08-0-CENTRAL-EUROPE-2019-06-01--2019-06-30-CAP-ENROUTE-1200-CLUSTERSIZE-30__DATA_S31622_42__departure_airports.pdf}  %
        \end{subfigure}
        
        \caption{
        Figure displaying an instance of CE (Central Europe), from the data generator.
        Left top: Initial navpoint-sector figure.
        Right top: Initial total overload exceedances ($K_o$) and aggregated trajectory-counts figure.
        Left bottom: Filed flights departure times.
        Right bottom: Histogram of departure airports.
        Instance: 31622 flights, CE.
        }
      \label{fig:schematic-general-graph}
  \end{figure}

\begin{table}[t]
\caption{
Small and SOTA scenarios (selected) results.
We show overload (O) and arrival delay (D) of the considered variants.
}
\label{tab:small-SOTA-exps}
\centering
\tiny
\resizebox{\linewidth}{!}{%
\setlength{\tabcolsep}{2.5pt} 
  \renewcommand{\arraystretch}{0.95} %
\begin{tabular}{lrrrrrrrrrrlrrrrrrrrrr}
\toprule
Instance & \multicolumn{10}{c}{\textbf{EA-3x3 (Seed 150699)}} & & \multicolumn{10}{c}{\textbf{CE-5x5 (Seed 150699)}} \\
\cmidrule(lr){2-11} \cmidrule(lr){13-22}
 & \multicolumn{2}{c}{Initial} & \multicolumn{2}{c}{$ASPaeroFlow_{r, d_p}$} & \multicolumn{2}{c}{CASA} & \multicolumn{2}{c}{MIP} & \multicolumn{2}{c}{ASP-P} & & \multicolumn{2}{c}{Initial} & \multicolumn{2}{c}{$ASPaeroFlow_{r, d_p}$} & \multicolumn{2}{c}{CASA} & \multicolumn{2}{c}{MIP} & \multicolumn{2}{c}{ASP-P} \\
\cmidrule(lr){2-3} \cmidrule(lr){4-5} \cmidrule(lr){6-7} \cmidrule(lr){8-9} \cmidrule(lr){10-11} \cmidrule(lr){13-14} \cmidrule(lr){15-16} \cmidrule(lr){17-18} \cmidrule(lr){19-20} \cmidrule(lr){21-22}
 & O & D & O & D & O & D & O & D & O & D & & O & D & O & D & O & D & O & D & O & D \\
\midrule
10 & 10 & 0 & 0 & 56 & 0 & 20 & \textbf{0} & \textbf{0} & 1 & 9 & & 1 & 0 & \textbf{0} & \textbf{0} & 0 & 2 & 0 & 2 & \textbf{0} & \textbf{0}  \\
20 & 22 & 0 & 0 & 311 & 0 & 156 & \textbf{0} & \textbf{15} & 7 & 33 & & 5 & 0 & 0 & 30 & 0 & 11 & 0 & 5 & \textbf{0} & \textbf{1}  \\
30 & 53 & 0 & 0 & 687 & 0 & 919 & \textbf{0} & \textbf{65} & 24 & 54 & & 13 & 0 & 0 & 102 & 0 & 79 & 0 & 11 & \textbf{0} & \textbf{4}  \\
40 & 80 & 0 & 0 & 1370 & 0 & 1247 & \textbf{0} & \textbf{230} & 54 & 102 & & 26 & 0 & 0 & 213 & 0 & 231 & 0 & 90 & \textbf{0} & \textbf{21}  \\
50 & 117 & 0 & \textbf{0} & \textbf{2115} & 0 & 2740 & 3 & 422 & 88 & 96 & & 37 & 0 & 0 & 342 & 0 & 506 & 0 & 187 & \textbf{0} & \textbf{25}  \\
60 & 133 & 0 & \textbf{0} & \textbf{3285} & 0 & 3351 & 12 & 482 & 95 & 127 & & 38 & 0 & 0 & 401 & 0 & 562 & 0 & 193 & \textbf{0} & \textbf{49}  \\
70 & 178 & 0 & \textbf{0} & \textbf{4504} & 0 & 5366 & 35 & 382 & 134 & 160 & & 59 & 0 & 0 & 748 & 0 & 903 & 0 & 305 & \textbf{0} & \textbf{57}  \\
80 & 204 & 0 & \textbf{0} & \textbf{6106} & 0 & 6302 & 59 & 250 & 157 & 178 & & 76 & 0 & \textbf{0} & \textbf{1005} & 0 & 1288 & 18 & 271 & 3 & 77  \\
90 & 241 & 0 & \textbf{0} & \textbf{7651} & 0 & 9517 & 88 & 186 & 192 & 230 & & 94 & 0 & \textbf{0} & \textbf{1356} & 0 & 2013 & 38 & 208 & 9 & 148  \\
100 & 287 & 0 & \textbf{0} & \textbf{9520} & 0 & 11034 & 120 & 111 & 241 & 199 & & 111 & 0 & \textbf{0} & \textbf{1578} & 0 & 2365 & 56 & 170 & 17 & 138  \\
\midrule
Instance & \multicolumn{10}{c}{\textbf{IND-4x10 (Seed 150699)}} & & \multicolumn{10}{c}{\textbf{USA-7x7 (Seed 150699)}} \\
\cmidrule(lr){2-11} \cmidrule(lr){13-22}
 & \multicolumn{2}{c}{Initial} & \multicolumn{2}{c}{$ASPaeroFlow_{r, d_p}$} & \multicolumn{2}{c}{CASA} & \multicolumn{2}{c}{MIP} & \multicolumn{2}{c}{ASP-P} & & \multicolumn{2}{c}{Initial} & \multicolumn{2}{c}{$ASPaeroFlow_{r, d_p}$} & \multicolumn{2}{c}{CASA} & \multicolumn{2}{c}{MIP} & \multicolumn{2}{c}{ASP-P} \\
\cmidrule(lr){2-3} \cmidrule(lr){4-5} \cmidrule(lr){6-7} \cmidrule(lr){8-9} \cmidrule(lr){10-11} \cmidrule(lr){13-14} \cmidrule(lr){15-16} \cmidrule(lr){17-18} \cmidrule(lr){19-20} \cmidrule(lr){21-22}
 & O & D & O & D & O & D & O & D & O & D & & O & D & O & D & O & D & O & D & O & D \\
\midrule
10 & 15 & 0 & 0 & 69 & 0 & 36 & 0 & 7 & \textbf{0} & \textbf{3} & & 1 & 0 & 0 & 10 & \textbf{0} & \textbf{1} & \textbf{0} & \textbf{1} & \textbf{0} & \textbf{1}  \\
20 & 21 & 0 & 0 & 192 & 0 & 152 & 0 & 34 & \textbf{0} & \textbf{5} & & 12 & 0 & 0 & 66 & 0 & 27 & 0 & 10 & \textbf{0} & \textbf{3}  \\
30 & 45 & 0 & 0 & 577 & 0 & 503 & 0 & 166 & \textbf{0} & \textbf{28} & & 26 & 0 & 0 & 312 & 0 & 251 & 0 & 63 & \textbf{0} & \textbf{7}  \\
40 & 70 & 0 & \textbf{0} & \textbf{904} & 0 & 1030 & 9 & 169 & 2 & 59 & & 49 & 0 & 0 & 689 & \textbf{0} & \textbf{468} & 9 & 117 & 4 & 27  \\
50 & 92 & 0 & \textbf{0} & \textbf{1395} & 0 & 1689 & 23 & 255 & 10 & 97 & & 61 & 0 & 0 & 1031 & \textbf{0} & \textbf{968} & 15 & 146 & 5 & 43  \\
60 & 136 & 0 & \textbf{0} & \textbf{2420} & 0 & 3115 & 77 & 133 & 38 & 125 & & 80 & 0 & \textbf{0} & \textbf{1368} & 0 & 1513 & 26 & 129 & 10 & 50  \\
70 & 135 & 0 & \textbf{0} & \textbf{3190} & 0 & 3241 & 72 & 177 & 35 & 147 & & 89 & 0 & \textbf{0} & \textbf{1721} & 0 & 1791 & 30 & 168 & 9 & 81  \\
80 & 186 & 0 & \textbf{0} & \textbf{4418} & 0 & 5481 & 131 & 85 & 54 & 177 & & 117 & 0 & \textbf{0} & \textbf{2472} & 0 & 2871 & 47 & 161 & 14 & 104  \\
90 & 193 & 0 & \textbf{0} & \textbf{5145} & 0 & 6016 & 138 & 111 & 77 & 186 & & 140 & 0 & \textbf{0} & \textbf{2988} & 0 & 3168 & 65 & 175 & 18 & 139  \\
100 & 233 & 0 & \textbf{0} & \textbf{6453} & 0 & 9523 & 181 & 87 & 100 & 197 & & 180 & 0 & \textbf{0} & \textbf{4083} & 0 & 5584 & 94 & 186 & 38 & 173  \\
\midrule
Instance & \multicolumn{10}{c}{\textbf{EUR-10x10 (Seed 150699)}} & Instance & \multicolumn{10}{c}{\textbf{SOTA}} \\
\cmidrule(lr){2-11} \cmidrule(lr){13-22}
 & \multicolumn{2}{c}{Initial} & \multicolumn{2}{c}{$ASPaeroFlow_{r, d_p}$} & \multicolumn{2}{c}{CASA} & \multicolumn{2}{c}{MIP} & \multicolumn{2}{c}{ASP-P} & & \multicolumn{2}{c}{Initial} & \multicolumn{2}{c}{$ASPaeroFlow_{r, d}$} & \multicolumn{2}{c}{$\textit{ASPaeroFlow}_{seq}$} & \multicolumn{2}{c}{$\textit{ATFM}_{r,d}$} & \multicolumn{2}{c}{$\textit{ATFM}_{r,d_p}$} \\
\cmidrule(lr){2-3} \cmidrule(lr){4-5} \cmidrule(lr){6-7} \cmidrule(lr){8-9} \cmidrule(lr){10-11} \cmidrule(lr){13-14} \cmidrule(lr){15-16} \cmidrule(lr){17-18} \cmidrule(lr){19-20} \cmidrule(lr){21-22}
 & O & D & O & D & O & D & O & D & O & D & & O & D & O & D & O & D & O & D & O & D \\
\midrule
10 & 8 & 0 & 0 & 2 & 0 & 15 & 0 & 4 & \textbf{0} & \textbf{0}  & BLO1 & 466 & 0 & 0 & 423 & 0 & 431 & \textbf{0} & \textbf{292} & 0 & 608 \\
20 & 14 & 0 & 0 & 87 & 0 & 69 & 0 & 11 & \textbf{0} & \textbf{3}  & BLO2 & 294 & 0 & 0 & 1897 & \textbf{0} & \textbf{1843} & 0 & 1897 & 0 & 1921 \\
30 & 25 & 0 & 0 & 93 & 0 & 128 & 0 & 33 & \textbf{0} & \textbf{1}  & BLO3 & 1884 & 0 & 0 & 27074 & 0 & 27074 & 0 & 27074 & \textbf{0} & \textbf{26845} \\
40 & 49 & 0 & 0 & 199 & 0 & 503 & 0 & 158 & \textbf{0} & \textbf{5}  & IC & 3368 & 0 & \textbf{0} & \textbf{85} & 0 & 222 & 0 & 26328 & 2744 & 6314 \\
50 & 62 & 0 & 0 & 305 & 0 & 1045 & 7 & 172 & \textbf{0} & \textbf{13}  & c01 & 466 & 0 & 0 & 423 & 0 & 431 & \textbf{0} & \textbf{292} & 0 & 608 \\
60 & 88 & 0 & 0 & 408 & 0 & 1621 & 23 & 208 & \textbf{0} & \textbf{41}  & c02 & 91 & 0 & \textbf{0} & \textbf{56} & 0 & 89 & \textbf{0} & \textbf{56} & 0 & 75 \\
70 & 110 & 0 & 0 & 621 & 0 & 2370 & 42 & 194 & \textbf{0} & \textbf{68}  & c03 & 171 & 0 & 0 & 83 & 0 & 39 & 0 & -86 & \textbf{0} & \textbf{-108} \\
80 & 148 & 0 & 0 & 943 & 0 & 3637 & 62 & 210 & \textbf{0} & \textbf{117}  & mBLO1 & 14 & 0 & 0 & -4 & 0 & 0 & \textbf{0} & \textbf{-18} & \textbf{0} & \textbf{-18} \\
90 & 156 & 0 & 0 & 884 & 0 & 4009 & 69 & 194 & \textbf{0} & \textbf{99}  & mBLO3 & 1884 & 0 & \textbf{0} & \textbf{27074} & \textbf{0} & \textbf{27074} & \textbf{0} & \textbf{27074} & 84 & 26409 \\
100 & 172 & 0 & \textbf{0} & \textbf{1181} & 0 & 5115 & 105 & 148 & 5 & 140 & &  &  &  &  &  &  &  &  &  &   \\
\bottomrule
\end{tabular}
}
\end{table}

In Table~\ref{tab:small-SOTA-exps} we show the detailed results for the small and SOTA scenarios.
Further, in Figure~\ref{fig:schematic-general-graph} we show an example of a generated instance produced by the data generator
and Figure~\ref{fig:significance-results} shows the detailed significance results.

\begin{figure}
    \centering
    \includegraphics[width=10cm]{imgs/10_statistical_tests/significance_matrix.pdf}
    \caption{
    Matrix showing pairwise statistical significance results.
    }
    \label{fig:significance-results}
\end{figure}

\subsection{ASP Encoding Details}
\label{sec:appendix-asp}
\begin{tcolorbox}[colback=white, colframe=black!50, boxrule=0.5pt, arc=2pt, 
                  title=\textbf{Generated Facts: Flight Trajectories}, 
                  coltitle=black, colbacktitle=black!5,
                  boxsep=0pt, left=2pt, right=2pt, top=0pt, bottom=0pt,
                  toptitle=1pt, bottomtitle=1pt, before skip=7pt, after skip=0pt]
\small
\renewcommand{\arraystretch}{0.85}
\begin{tabularx}{\linewidth}{@{}l X@{}}
    $\mathtt{trj}(f, p)$ & Defines the available trajectories (route-delay options) $p$ for flight $f$. \\
    $\mathtt{fpl}(f, n, t)$ & Specifies the presence for a filed flight $f$ at navpoint $n$ at time $t$. \\
    $\mathtt{actd}(f, t, p)$ & Indicates the actual departure time $t$ for flight $f$ using path $p$. \\
    $\mathtt{plana}(f, t)$ & Defines the originally planned arrival time $t$ for flight $f$. \\
    $\mathtt{acta}(f, t, p)$ & Denotes the actual arrival time $t$ for flight $f$ given the chosen trj. $p$. \\
    $\mathtt{nxp}(\dots)$ & Encodes sequential trajectory segments: $(f, p, n_{i-1}, t_{i-1}, n_i, t_i)$. \\
    $\mathtt{chp}(\dots)$ & Propagates trajectory selection: $(f_{aff}, p) \leftarrow (f, p)$.
\end{tabularx}
\end{tcolorbox}
%
%
\begin{tcolorbox}[colback=white, colframe=black!50, boxrule=0.5pt, arc=2pt, 
                  title=\textbf{Generated Facts: Flight Trajectories}, 
                  coltitle=black, colbacktitle=black!5,
                  boxsep=0pt, left=2pt, right=2pt, top=0pt, bottom=0pt,
                  toptitle=1pt, bottomtitle=1pt, before skip=5pt, after skip=10pt]
\small
\begin{tabularx}{\textwidth}{@{}l X@{}}
    $\mathtt{config}(c, k_{conf})$ & Sector configuration $c$ associated with $k_{conf}$ underlying conflicts. \\
    $\mathtt{sec\_cap\_conf}(\dots)$ & Sets the capacity $k_{cap}$ of sector $s$ at time $t$ under configuration $c$---assuming all flights fixed except to-be-optimized flights. \\
    $\mathtt{nav\_assign\_conf}(\dots)$ & Maps navpoint $n$ to sector $s$ at time $t$ within configuration $c$.
\end{tabularx}
\end{tcolorbox}
\noindent
The following part of the encoding shows how guesses are performed.
Line~(1) shows how guesses for trajectories,
and Line~(2) shows guesses for sector configurations.
\begin{lstlisting}
1{chtrj(ID,P):trj(ID,P)}1 :- flightID(ID).
1{chc(CONFIG):config(CONFIG,_)}1.
\end{lstlisting}

\noindent
Next, we map the chosen trajectory to the navpoint trajectory that is actually flown, which is needed to compute the updated converted instance $\mathbf{M}'$ in Line~\ref{alg:heuristic-alg-1:line-7} in Algorithm~\ref{alg:heuristic-alg-1}.
Normal flight are shown in Lines~(1) and~(2), which map for each flown edge $(v_0,v_1)$ the navpoints to the trajectory.
Line~(3) is a special case for handling atomic flights.
\begin{lstlisting}
nvpt_f(ID,NAV0,T) :- actd(ID,TSTART,P),chtrj(ID,P),nxp(ID,P,NAV0,T0,_,_),T=T0+TSTART.
nvpt_f(ID,NAV1,T) :- actd(ID,TSTART,P),chtrj(ID,P),nxp(ID,P,_,_,NAV1,T1),T=T1+TSTART.
nvpt_f(ID,NAV0,T) :- actd(ID,TSTART,P),chtrj(ID,P),single_pos(ID,P,NAV0,T0),T=T0+TSTART.
\end{lstlisting}

\noindent
Line~(1) maps a chosen sector configuration to the effects it has on the navpoint sector assignments
and Line~(2) maps the capacities of the chosen sector configuration to the $sec\_cap$ predicate.
\begin{lstlisting}
nav_ass(NAV,SEC,TIME) :- chc(CONFIG), nav_assign_conf(NAV,SEC,TIME,CONFIG).
sec_cap(SEC,CAPACITY,TIME) :- chc(CONFIG), sec_cap_conf(SEC,CAPACITY,TIME,CONFIG).
\end{lstlisting}

\noindent
Combining the chosen trajectory and the chosen config, we are left to map the flown navpoint trajectory to the actual effects on the sectors.
Lines~(1) and~(2) handle the standard behavior for a trajectory; they encode the standard behavior for a flight $f \in \mathbf{F}$ reaching navpoint $v_a$ at $t_a$, followed by arriving at $v_b$ at $t_b$.
Then $f$ is in sector $\textit{sec}^{-1}(v_a,t)$ in $t \in [t_a, t_a + \lfloor \frac{\Delta t}{2} \rfloor]$ and in sector $\textit{sec}^{-1}(v_b,t')$ for $t' \in [t_a + \lfloor \frac{\Delta t}{2} \rfloor + 1, t_b]$.
Additionally, we consider two special cases: first atomic flights (Line~(3)) and then flights staying at a navpoint (Line~(4)).
\begin{lstlisting}
sec_f(ID,SEC0,T) :- actd(ID,TSTART,P),chtrj(ID,P),time(T),nxp(ID,P,NAV0,T0,NAV1,T1),NAV0!=NAV1,T>=T0+TSTART,T<=T1+TSTART,DT=(T1-T0)/2,T<=DT+T0+TSTART,nav_ass(NAV0,SEC0,T).
sec_f(ID,SEC1,T) :- actd(ID,TSTART,P),chtrj(ID,P),time(T),nxp(ID,P,NAV0,T0,NAV1,T1),NAV0!=NAV1,T>=T0+TSTART,T<=T1+TSTART,DT=(T1-T0)/2,T>DT+T0+TSTART,nav_ass(NAV1,SEC1,T).
sec_f(ID,SEC0,T) :- actd(ID,TSTART,P),chtrj(ID,P),single_pos(ID,P,NAV0,T0),nav_ass(NAV0,SEC0,T),T=T0+TSTART.
sec_f(ID,SEC0,T) :- actd(ID,TSTART,P), chtrj(ID,P), time(T), nxp(ID,P,NAV0,T0,NAV0,T1), T >= T0 + TSTART, T <= T1 + TSTART, nav_ass(NAV0,SEC0,T).
\end{lstlisting}

\noindent
If an overload occurs, we store this in the predicate $overload$ (Line~(1)),
an arrival delay is stored in $arrival\_delay$ (Line~(2)),
and if a flight is rerouted, it is stored in $reroute$ (Line~(3)).
\begin{lstlisting}
overload(SEC, T, LOAD-CAPACITY) :- sec_cap(SEC,CAPACITY,T), #count{ID:sec_f(ID,SEC,T)} = LOAD, LOAD > CAPACITY.
arrival_delay(ID,Y-T) :- chtrj(ID,P), plana(ID,T), acta(ID,Y,P).
reroute(ID) :- flightID(ID), not chtrj(ID,0).
\end{lstlisting}
\clearpage

\section{Supplementary Material}

\subsection{Model Summary and Decomposition Bounds}

See Table~\ref{tab:model-overview} for a quick overview of the model.

\begin{table}[t]
\caption{Compact summary of the model.}
\label{tab:model-overview}
\centering
\tiny
\setlength{\tabcolsep}{5pt}
\renewcommand{\arraystretch}{1.00}

\begin{tabular}{L@{\hspace{-10pt}}R}
\toprule
\multicolumn{2}{l}{\textbf{Model overview}}\\
\midrule
\textbf{Instance input} &
$\mathcal{I}=(G,T_{\textit{gran}},T,\textit{sec},\textit{cap},F,A)$ navpoint graph $G=(V,E)$, time granularity $T_{\textit{gran}}$, set of timesteps $T$, initial sector config. \textit{sec}, atomic capacities \textit{cap}, filed flights $F$, and aircraft flight mappings $A$.\\

\textbf{Solution} &
$\mathbf{S}=(\mathbf{T},\mathbf{SEC},\mathbf{F})$ with a potentially extended time horizon $|\mathbf{T}| \geq |T|$, a pot. changed sector config. $\mathbf{SEC}: V \times\mathbf{T}\rightarrow 2^V$, and with pot. adjusted trajectories $\mathbf{F}$. \\

\textbf{Actions} &
Delay and reroute trajectories via $\mathbf{A}$; DAC via $\mathbf{SEC}$.
\\

\textbf{Hard constraints} &
(1) Flight feasibility: each realized flight matches a filed one \;
(2) Safety/capacity: $\forall i,t:\ q(i,t)=\sum_{f}\textit{over}(f,i,t)\le \textit{cap}(i,t)$.\\

\textbf{Soft constraints} &
Lexicographic minimization:
total arrival delay \ATFCMoDELAY,
\# active total sectors \ATFCMoNUMSEC,
\# sector changes \ATFCMoSECCHANGES,
\# regulated flights \ATFCMoREROUTED,
\# reconfigurations \ATFCMoRECONFIG\\
\bottomrule
\end{tabular}
\end{table}

\subsection{Auxiliary Predicates}

Here we define the auxiliary predicates.
Line~(1) collects flight IDs,
Lines~(2) and~(3) collect the minimum and maximum time respectively,
Line~(4) provide dense times (no holes) between min and max time, and 
Lines~(5) and~(6) ensure that all flights occur.
\begin{lstlisting}
flightID(ID) :- fpl(ID,_,_).
time_minimum(T) :- T = #min{T':nav_assign_conf(_,_,T',_);T':nxp(_,_,_,T',_,_);T':single_pos(_,_,_,T')}.
time_maximum(T) :- T = #max{T':nav_assign_conf(_,_,T',_);T':nxp(_,_,_,_,_,T');T':single_pos(_,_,_,T')}.
time(TMIN..TMAX) :- time_minimum(TMIN), time_maximum(TMAX).
flight_occurs(ID) :- sec_f(ID,_,_).
:- flightID(ID), not flight_occurs(ID).
\end{lstlisting}

\subsection{Data Generator Details}
\label{sec:appendix-datagen}

The scarcity of open data in Air Traffic Flow and Capacity Management (ATFCM) remains an \emph{open challenge}.
To address this and support initiatives like the Open Science Alliance for ATM, we extend the open-source \emph{ASPaeroFlow-DataGenerator}\footnote{GitHub: \url{https://github.com/alexl4123/ASPaeroFlow-DataGenerator}}.
Our enhancement supplements the previous grid-graph model with realistic navpoint generation.

\subparagraph{Navpoint Data Generation.}
The generator operates in two modes:
\begin{itemize}
    \item \textbf{Real-World Topology:} Real-world navpoints and airports (our contribution).
    \item \textbf{Hybrid Topology:} Grid-graph navpoints with real-world airports.
\end{itemize}
Geographic data merges \texttt{OurAirports-Data} (Unlicense) and \texttt{BlueSky} (GNU GPL), yielding a graph of ~160,000 navpoints (world-wide).
The generator implements spatial filters (minimum separation), altitude specifications, and topological structures like \emph{Gabriel graphs} and \emph{local cliques}. 
Sectors are generated by a BFS algorithm, which ensures connected sectors.

\subparagraph{Flight Data Generation.}
The generator produces industry-sized synthetic instances using real geographical data~\cite{beiser_LPNMR_2026}.
Flight data is generated using a probabilistic model derived from the \textit{OpenSky Network} COVID-19 dataset (specifically using data from June 2019, representing pre-pandemic traffic levels).
A \emph{Poisson model} defines airport sending strength.
Subsequently, pairs are generated by sampling from a discrete probability distribution, which are transformed into trajectories by a shortest-path trajectory generator.
Multi-leg flights are generated based on aircraft~availability.
Formal statistical validation of the generated data against historical flight distributions is planned for future work.

\textbf{Details}.
Let $A = \textit{airport(G)}$ be the airports, then each airport $a \in A$ has for a timestep $t_b \in T$ a sending strength $\lambda_{a,t_b}$.
Let $d$ be a day of a number of specified days $d \in D$, then the actual number of departures is $\#\textit{departures}(a,t,d)$, and the actual number of departures at $a$ with destination $a_d \in A$ is $\#\textit{dest}(a,t,d,a_d)$.
The probabilistic model is created by aggregating the departures into a matrix $\Lambda \in \mathbb{N}^{|A| \times |T|}$, where $\lambda_{a,t} = \Lambda_{[a,t]} = \frac{\sum_{d \in D} \#\textit{departures}(a,t,d)}{|D|}$.
For data generation, we sample the number of departures for an airport $a_o \in A$, and for a timestep $t\in T$, from $\#a_o \sim \text{Pois}(\lambda_{a_o,t})$.
It remains to generate for each departure at airport $a_o$, a destination airport $a_d$:
Let $\Psi \in \mathbb{N}^{|A| \times |A| \times |T|}$,
where $\psi_{a_o,a_d,t} = \Psi_{[a_o,a_d,t]} = \frac{\sum_{d \in D} \#\textit{dest}(a_o,t,d,a_d)}{|D|}$.
Further, let $a_o \in A$ and $t \in T$, then the probability for choosing destination $a_d \in A$ is $P(a_d \mid a_o,t) = \frac{\psi_{a_o,a_d,t} + \alpha}{\sum_{a \in A} \psi_{a_o,a_d,t} + \alpha}$,
where $\alpha \in \mathbb{R}$ is a small constant.
Destinations are drawn as a categorical random variable:
$a_d \sim \textit{Categorical}\left( P(a_d \mid a_o,t)_{a_d \in A} \right)$.

\subsection{A Brief intro to Answer Set Programming}

ASP~\cite{gelfond_logic_2002} is a declarative, logic-based, approach towards problem solving.
ASP originates from logic programming. It has evolved from a formal logical framework into a practically mature approach with a rich ecosystem, with solvers like Clingo~\cite{gebser_theory_2016}.
Compared to other similar approaches for problem solving, ASP is often considered as more intelligible, which makes it a good candidate for safety-critical systems.
This combination has led to a more widespread usage of ASP also outside of its academic origin~\cite{falkner_industrial_2018}.
%
We present necessary notions for understanding our encoding; for details we refer to the standard literature~\cite{eiter_answer_2009,schaub_special_2018}.
An (non-ground normal) ASP program $\Pi$ consists of rules $r \in \Pi$ of the form
$p_1(\mathbf{X}_1) \leftarrow p_{2}(\mathbf{X}_{2}), \ldots, p_m(\mathbf{X}_m), \neg p_{m+1}(\mathbf{X}_{m+1}), \ldots, \neg p_n(\mathbf{X}_n)$,
where $1 \leq m \leq n$.
Each $p_i$ is a predicate with an arity $|\mathbf{X}_i|$ and has terms $\mathbf{X}_i$, where $x \in \mathbf{X}_i$ can be a constant or a variable.
We provide a series of examples for introducing ASP.
\begin{example}
\label{ex:basic-rule}
Rules are read right to left.
The following is a non-ground rule with variable $ID$ and constant $0$, that describes that a flight is rerouted, unless its chosen path is $0$.
\begin{lstlisting}
reroute(ID) :- flightID(ID), not chosen_path(ID,0).
\end{lstlisting}
\end{example}

\noindent
We say head to $H_r \coloneqq \{a_1, \ldots, a_l\}$,
positive body to $B_r^+ \coloneqq \{a_{l+1}, \ldots, a_m\}$,
negative body to $B_r^{-} \coloneqq \{a_{m+1}, \ldots, a_n\}$, and
body to $B_r \coloneqq B_r^+ \cup B_r^{-}$.
We say a rule $r \in P$ is normal iff $|H_r| \leq 1$,
a constraint iff $|H_r| = 0$,
disjunctive iff $|H_r| > 1$, and
positive iff $|B_r^{-}| = 0$.
\begin{example}
Constraints are used to restrict the solution space.
The following rule says it must not be the case that a flight does not occur:
\begin{lstlisting}
:- flightID(ID), not flight_occurs(ID).
\end{lstlisting}
\end{example}

\noindent
We use aggregates (count, min, and max), choice-rules, and soft-constraints as usual~\cite{gebser_clingo_2019}.
\begin{example}
The following rule collects information about overloaded sectors. 
We use the $\#count$ aggregate to count the demand.
\begin{lstlisting}
overload(SEC, T, LOAD-CAPACITY) :- sec_cap(SEC,CAPACITY,T), #count{ID:sector_flight(ID,SEC,T)} = LOAD, LOAD > CAPACITY.
\end{lstlisting}

\noindent
Soft constraints are used to specify objective functions. The following minimizes the overload.
\begin{lstlisting}
:~ overload(X,T,OVER). [OVER@10,X,T]
\end{lstlisting}
\end{example}

\noindent
Choice-rules let one introduce ``guesses'' --- thereby, are essential for defining the search space.
The following choice-rule encodes that per flight exactly one path must be chosen.
\begin{lstlisting}
1{chosen_path(ID,P):paths(ID,P)}1 :- flightID(ID).
\end{lstlisting}

Solving an ASP program is usually done in a two-step process according to the ground-and-solve paradigm.
Grounding a program $\Pi$, denoted as $\mathcal{G}(\Pi)$, refers to the instantiation of the variables by their domain values.
Let $HB(\mathcal{G}(\Pi)$ be the Herbrand Base, so the set of all possible atoms.
Given a ground program we can compute an answerset (a solution):
An interpretation $I$ is a set of atoms $I \subseteq HB(\mathcal{G}(\Pi))$.
$I$ \emph{satisfies} a rule $r \in \mathcal{G}(\Pi)$ iff $(H_r \cup B_r^{-}) \cap I \not = \emptyset$ or $I \not \models B_r^+$, so either for an atom $a \in B_r^+$ $a \not \in I$.
$I$ is a \emph{model} of $\mathcal{G}(\Pi)$ iff it satisfies all rules of $\mathcal{G}(\Pi)$.
The \emph{Gelfond-Lifschitz (GL) reduct} of~$\mathcal{G}(\Pi)$ under~$I$ is the program~$\mathcal{G}(\Pi)^I$ obtained
from $\mathcal{G}(\Pi)$ by first removing all rules~$r$ with
$B^-_r{\,\cap\,} I\neq \emptyset$ and then removing all~$\neg a$ where
$a \in B^-_r$ from the remaining
rules~$r$ \cite{gelfond_classical_1991}. 
$I$ is an \emph{answer set} of a program~$\mathcal{G}(\Pi)$ iff $I$ is a \emph{
minimal model} (w.r.t.~$\subseteq$) of~$\mathcal{G}(\Pi)^I$.

\begin{example}
Let $\Pi$ be the rule of Example~\ref{ex:basic-rule} with the following facts $\mathcal{F}$
\begin{lstlisting}
flightID(0). flightID(1). chosen_path(0,0). chosen_path(1,1).
\end{lstlisting}
Then $\mathcal{I}_0 = \mathcal{F} \cup \{\textit{reroute}(1)\}$ is an answerset,
$\mathcal{I}_1 = \mathcal{F} \cup  \{\textit{reroute}(1), \textit{reroute}(0)\}$ is a model of $\mathcal{G}(\Pi)$ (but not an answerset),
and $\mathcal{I}_2 = \mathcal{F}$ does not satisfy $\mathcal{G}(\Pi)$. 
\end{example}

\subsection{Full Model (no abbreviations)}

\begin{lstlisting}
% Encoding in the ASPaeroFlow Optimizer:
% GET ALL FLIGHT IDS:

flightID(ID) :- flightPlan(ID,_,_).

all_flightID(ID) :- flightID(ID).
all_flightID(ID) :- sector_flight(ID,_,_).
all_flightID(ID) :- navpoint_flight(ID,_,_).

% GUESSES:
1{chosen_config(CONFIG):config(CONFIG,_)}1.
1{chosen_path(ID,P):paths(ID,P)}1 :- flightID(ID).

reroute(ID) :- flightID(ID), not chosen_path(ID,0).

% GET CHOSEN SECTORS:
navpoint_sector_assignment(NAV,SEC,TIME) :- chosen_config(CONFIG), possible_assignment(NAV,SEC,TIME,CONFIG).
sector_capacity(SEC,CAPACITY,TIME) :- chosen_config(CONFIG), possible_sector_capacity(SEC,CAPACITY,TIME,CONFIG).


time_minimum(T) :- T = #min{T':possible_assignment(_,_,T',_);T':next_pos(_,_,_,T',_,_);T':single_pos(_,_,_,T')}.
time_maximum(T) :- T = #max{T':possible_assignment(_,_,T',_);T':next_pos(_,_,_,_,_,T');T':single_pos(_,_,_,T')}.
time(TMIN..TMAX) :- time_minimum(TMIN), time_maximum(TMAX).

% FOR STARTING AIRPORT:
sector_flight(ID,SEC0,T) :- actual_flight_operations_start_time(ID,TSTART,P), chosen_path(ID,P), single_pos(ID,P,NAV0,T0), navpoint_sector_assignment(NAV0,SEC0,T), T = T0 + TSTART.
sector_flight(ID,SEC0,T) :- actual_flight_operations_start_time(ID,TSTART,P), chosen_path(ID,P), time(T), next_pos(ID,P,NAV0,T0,NAV0,T1), T >= T0 + TSTART, T <= T1 + TSTART, navpoint_sector_assignment(NAV0,SEC0,T).
% EN-ROUTE:
sector_flight(ID,SEC0,T) :- actual_flight_operations_start_time(ID,TSTART,P), chosen_path(ID,P), time(T), next_pos(ID,P,NAV0,T0,NAV1,T1), NAV0 != NAV1, T >= T0 + TSTART, T <= T1 + TSTART, DT = (T1 - T0)/2, T <= DT + T0 + TSTART, navpoint_sector_assignment(NAV0,SEC0,T).
sector_flight(ID,SEC1,T) :- actual_flight_operations_start_time(ID,TSTART,P), chosen_path(ID,P), time(T), next_pos(ID,P,NAV0,T0,NAV1,T1), NAV0 != NAV1, T >= T0 + TSTART, T <= T1 + TSTART, DT = (T1 - T0)/2, T > DT + T0 + TSTART, navpoint_sector_assignment(NAV1,SEC1,T).

navpoint_flight(ID,NAV0,T) :- actual_flight_operations_start_time(ID,TSTART,P), chosen_path(ID,P), single_pos(ID,P,NAV0,T0), T = T0 + TSTART.
navpoint_flight(ID,NAV0,T) :- actual_flight_operations_start_time(ID,TSTART,P), chosen_path(ID,P), next_pos(ID,P,NAV0,T0,_,_), T = T0 + TSTART.
navpoint_flight(ID,NAV1,T) :- actual_flight_operations_start_time(ID,TSTART,P), chosen_path(ID,P), next_pos(ID,P,_,_,NAV1,T1), T = T1 + TSTART.

% CAPACITY CONSTRAINT:
overload(SEC, T, LOAD-CAPACITY) :- sector_capacity(SEC,CAPACITY,T), #count{ID:sector_flight(ID,SEC,T)} = LOAD, LOAD > CAPACITY.
% WITH DIFFS:
%overload(SEC1,T,LOAD-CAPACITY) :- sector_capacity(SEC1,CAPACITY,T), #count{ID:sector_flight(ID,SEC1,T),sector_flight(ID,SEC2,T-1), SEC1!=SEC2} = LOAD, LOAD > CAPACITY.

%%%%%%%%%%%%%%%%%%%%%%%%%%%%%%%%%%%%%%%%%%%%%%%%%%%%%%%%%%%%%%%

% FLIGHT MUST OCCUR:
flight_occurs(ID) :- sector_flight(ID,_,_).
:- flightID(ID), not flight_occurs(ID).


arrival_delay(ID,Y-T) :- chosen_path(ID,P), planned_arrival_time(ID,T), actual_arrival_time(ID,Y,P).

% ------------- OPTIMIZATION ----------------
% ------------------------------------------------------------------------------
% PRIMARY WEAK CONSTRAINT:
:~ overload(X,T,OVER). [OVER@10,X,T]
:~ chosen_config(CONFIG), config(CONFIG,TOTAL_OVER). [TOTAL_OVER@10,CONFIG]
% SECONDARY WEAK CONSTRAINT:
:~ arrival_delay(ID,DIFF). [DIFF@9,ID]
% TERTIARY WEAK CONSTRAINT:
:~ chosen_config(CONFIG), config_number_sectors(CONFIG,NUMBER). [NUMBER@8,CONFIG]
% QUINARY WEAK CONSTRAINT:
:~ reroute(ID). [1@7,ID]
% QUARTERNARY WEAK CONSTRAINT:
:~ chosen_config(CONFIG). [CONFIG@6,CONFIG]
\end{lstlisting}

\subsection{Dynamic MIP Model}
We simulate the SOTA MIP model with the following dynamic MIP model.
Dynamic in the sense that the ground delay and reroute alternatives are generated by the Python wrapper.
Otherwise, we restrict the MIP model to a SOTA model in the sense that it only optimizes routes and delay, but not DAC.

\begin{figure*}
\begin{flalign}
    \notag &\textbf{Simulated SOTA MIP Model}&& \\
    &\label{eq:mip-real-01} \min \sum_{f \in F} \sum_{p \in P_f} d_p \cdot \omega_{\textit{dest}_f, t_{\textit{dest}}}^{f,p} \\
    &\nonumber \textbf{Subject to:}\\
    &\label{eq:mip-real-03} \omega_{j,t}^{f,p} \leq \omega_{i,t+1}^{f,p} && \textit{For all } f \in F, p \in P_f, t \in T^{f,p}, t+1 \in T^{f,p}, j = S_t^{f,p}, i = S_{t+1}^{f,p}\\
    &\label{eq:mip-real-04} \sum_{p \in P_t^f, j = S_t^{f,p}} \omega_{j,t}^{f,p} \leq 1 && \textit{For all } f \in F, t \in T_f\\
    &\label{eq:mip-real-05} \sum_{p \in P_{j,t}^f} \omega_{j,t}^{f,p} \leq 1 && \textit{For all } f \in F, t \in T_f, j = S_t^f\\
    %
    %
    &\label{eq:mip-real-07} 1 \leq \sum_{p \in P_f} \omega_{s,t_{s,\textit{min}}}^{f,p} && \textit{For all } f \in F, s = \textit{source}_f\\
    &\label{eq:mip-real-07-a} \sum_{p \in P_f} \omega_{s,t_{s,\textit{min}}}^{f,p} \leq 1 && \textit{For all } f \in F, s = \textit{source}_f\\
    &\label{eq:mip-real-08} \omega_{j,t}^{f,p} \in \{0,1\} && \textit{For all } f \in F, p \in P_f, t \in T^{f,p}, j = S^{f,p}_t\\
    \nonumber &\textbf{Consecutive flights}\\
    &\label{eq:mip-real-09} (1 - \omega_{i,t_{dest}}^{f_0,p'}) \geq \omega_{j,t_{start}}^{f_1,p''} && \textit{For all } f_0, f_1 \textit{ s.t. } f_0,f_1 \in A \textit{ and}\\
        \nonumber &&& \textit{planned landing time of } f_0 \textit{ is before planned departure time of } f_1\\
        \nonumber &&& \textit{but path one is } t_{dest} \geq t_{start}\\
    \nonumber &\textbf{Capacity Slots}\\
    &\label{eq:mip-real-02} \sum_{f \in F} \sum_{p \in P_f } \omega_{j,t}^{f,p} < C_j(t) &&\textit{For all } j \in S, t \in T
    \end{flalign}
\vspace{-0.7cm}
\end{figure*}

Let $F$ be the set of flights, where $f \in F$ has a departure ($\textit{source}_f$) and an arrival ($\textit{destination}_f$) airport,
with associated departure time $t_{\textit{start}}$ and expected arrival time $\hat{t}_{\textit{dest}}$.
Let $\mathcal{G}$ be the navpoint graph.
The set of paths $P$ is the set of simple paths between any two vertices in the navpoint graWe create for each jobph $\mathcal{G}$.
Then let $P_f$ be the set of paths restricted to f: $P_f = \{p \mid p = \langle \textit{source}_f, \ldots, \textit{destination}_f \rangle \in P \}$.

Further, let $T$ be the set of times in the whole application,
whereas, $T^{f,p}$ is the set of times that an airplane needs to travel from $\textit{source}_f$ to $\textit{destination}_f$.
Given a flight $f \in F$, a path $p \in P_f$, and a specific time during the flight $t \in T^{f,p}$,
we can compute the estimated current sector-position of the flight $j = S^{f,p}_t$.
Lastly, let $S$ be the set of sectors.
We interpret the sectors as the \emph{Voronoi} diagram of the navpoints, where each sector may have a set of corresponding navpoints.
The location of the flight along a path is computed according to the Voronoi interpretation.
A sector $j \in S$ has a capacity per time unit $C_j(t)$.
Example:  
When a flight $f$ starts at $t=0$ at navpoint $a$ (sector $A$) and travels to navpoint $b$ (sector $B$) - and the duration of the flight takes $\Delta t = 3$ units of time,
then flight $f$ is in sector A during time units $\{0,1\}$,
and in sector B during time units $\{2,3\}$.

Further, let $\textit{isFirst}(j,t,f,p)$ evaluate to true iff $\forall t' \in T^{f,p}: t'<t$ it holds that $i=S^{f,p}_{t'}$, where $i \not = j$.
Conversely, $\textit{isLast}(j,t,f,p)$ evaluates to true iff $\forall t' \in T^{f,p}: t' > t$ it holds that $i = S^{f,p}_{t'}$, where $i \not j$.

We treat capacity violations as a hard constraint and optimize for efficiency (to be more in line with SOTA work).
In the future, we plan to adapt this SOAT simulation model to a new model in our formalism, by incorporating DAC optimization and lexicographic optimization.

\emph{Description of Equations}.
We define variables for flightss, delays, and paths in Equation~(\ref{eq:mip-real-08}).
We require that exactly one departing flight is chosen (Equations~(\ref{eq:mip-real-07}) and~(\ref{eq:mip-real-07-a})).
If a path is chosen it must be kept throughout the flight (Equation~(\ref{eq:mip-real-03})) and there must not be more than one path at any other node (Equation~(\ref{eq:mip-real-04})).
Similarly, at any node along the path there must only be one path active (Equation~(\ref{eq:mip-real-05})).
At any given timestep, a sector must not be overloaden (Equation~(\ref{eq:mip-real-02})) and for an airplane with multiple flights per day, it must not be that the departure time $t_{\textit{start}}$ of the next flight is before the landing time of the previous flight $t_{\textit{dest}}$ (Equation~(\ref{eq:mip-real-09})).
One may switch Equation~(\ref{eq:mip-real-02}) to Equation~(\ref{eq:mip-real-02-diff}) to switch demand measurement from number of airplanes in a sector, to airplanes entering a sector.

\begin{flalign}
    &\nonumber \textbf{Diff Capacity}\\
    &\label{eq:mip-real-02-diff} \sum_{f \in F} \sum_{p \in P_f \land \textit{isFirst}(j,t,f,p)} \omega_{j,t}^{f,p} < C_j(t) &&\textit{For all } j \in S, t \in T
\end{flalign}

\subsection{Additional Experimental Details}
We provide additional experimental details, not shown in the main part or in the Appendix.
In Figure~\ref{fig:nominal-grid-graphs} we show the results for the nominal capacity scaling results and in Figure~\ref{fig:significance-results} we show the significance matrix, showing statistical significance between different variants.
Tables~\ref{tab:summary-by-nominal-capacity}--\ref{tab:summary-by-scenario} show aggregated results for the larger scenarios,
by highlighting the capacity, instances size, and scenario dependence, respectively.
Detailed results can be found in the supplementary material.
Table~\ref{tab:summary-by-variant-new} shows aggregated results.

\begin{figure}[t]
    \centering
    \begin{subfigure}[t]{0.4\textwidth}
    \includegraphics[width=0.99\textwidth]{imgs/03_scaling/20260603__variant_id-01_ASPaeroFlow__capacity-20.pdf}
    \end{subfigure}    
    \begin{subfigure}[t]{0.4\textwidth}
    \includegraphics[width=0.99\textwidth]{imgs/03_scaling/20260603__variant_id-0_Sequential__capacity-20.pdf}
    \end{subfigure}
    \begin{subfigure}[t]{0.18\textwidth}
    \includegraphics[width=0.99\textwidth]{imgs/03_scaling/20260603__variant_id-01_ASPaeroFlow__capacity-20__legend.pdf}
    \end{subfigure}

    \begin{subfigure}[t]{0.4\textwidth}
    \includegraphics[width=0.99\textwidth]{imgs/03_scaling/20260603__variant_id-03A_CASA__capacity-20.pdf}
    \end{subfigure}
    \begin{subfigure}[t]{0.4\textwidth}
    \includegraphics[width=0.99\textwidth]{imgs/03_scaling/20260603__variant_id-02_RerouteDelay__capacity-20.pdf}
    \end{subfigure}
    \begin{subfigure}[t]{0.18\textwidth}
    \includegraphics[width=0.99\textwidth]{imgs/03_scaling/20260603__variant_id-01_ASPaeroFlow__capacity-20__legend.pdf}
    \end{subfigure}
    \vspace{-0.7cm}
    \caption{
    $\Delta$-Scaling results for $10\%$ nominal capacity of $\text{ASPaeroFlow}_{r,d}$ (left-top), $\text{ASPaeroFlow}_{seq}$ (right-top), CASA (left-bottom), and $\text{ATFM}_{r,d}$ (right-bottom).
    The X-axis shows the graph size $|V|$, the Y-axis the number of flights $|F|$.
    For each $|F|,|V|$-pair, the rectangle shows the $\Delta$-overload reduction (left-half) and the arrival delay (right-half).
    }
    \label{fig:scaling-experiments}
    \vspace{-0.3cm}
\end{figure}

\begin{table}[t]
\centering
\resizebox{\textwidth}{!}{%
\begin{tabular}{lrrrrrr}
\toprule
Variant & Overload [\#] & Arrival Delay [h] & Sector-Number [\#] & Sector-Diff [\#] & Reroute [\#] & Reconfig [\#] \\
\midrule
$ASPaeroFlow_{seq}$ & $2777.62 \pm 379.70$ & $18818.67 \pm 1543.41$ & $87013.82 \pm 3392.09$ & $400.93 \pm 23.63$ & $749.34 \pm 42.30$ & $41206.49 \pm 3084.78$ \\
$ASPaeroFlow_{r,d}$ & $4073.11 \pm 547.13$ & $5955.45 \pm 628.65$ & $86314.66 \pm 3348.89$ & $337.65 \pm 17.51$ & $5187.05 \pm 171.84$ & $35137.84 \pm 2516.86$ \\
$ASPaeroFlow_{\neg r, d}$ & $5052.01 \pm 596.82$ & $8390.40 \pm 763.23$ & $89104.99 \pm 3441.68$ & $314.24 \pm 16.10$ & $5020.13 \pm 167.78$ & $35308.15 \pm 2407.23$ \\
$ASPaeroFlow_{r, d_p}$ & $5419.77 \pm 632.08$ & $2766.05 \pm 254.54$ & $82624.35 \pm 3152.47$ & $301.95 \pm 14.41$ & $5091.58 \pm 168.16$ & $28926.61 \pm 1888.04$ \\
$ASPaeroFlow_{\neg r, d_p}$ & $6532.10 \pm 669.41$ & $3659.26 \pm 298.22$ & $84864.11 \pm 3231.57$ & $265.40 \pm 12.27$ & $4842.18 \pm 163.09$ & $27296.11 \pm 1692.40$ \\
CASA & $12508.41 \pm 999.37$ & $72598.35 \pm 3113.78$ & $109210.61 \pm 4130.40$ & $0.00 \pm 0.00$ & $1838.58 \pm 67.88$ & $0.00 \pm 0.00$ \\
$ATFM_{r,d}$ & $13415.75 \pm 1017.33$ & $54205.15 \pm 2245.30$ & $99519.31 \pm 3736.04$ & $0.00 \pm 0.00$ & $1801.22 \pm 64.11$ & $0.00 \pm 0.00$ \\
$ATFM_{\neg r, d}$ & $13443.02 \pm 1019.90$ & $63289.14 \pm 2557.06$ & $103313.06 \pm 3888.24$ & $0.00 \pm 0.00$ & $1853.86 \pm 67.93$ & $0.00 \pm 0.00$ \\
$ATFM_{r,d_p}$ & $15876.31 \pm 1044.47$ & $18716.13 \pm 730.39$ & $95244.43 \pm 3557.80$ & $0.00 \pm 0.00$ & $923.25 \pm 25.81$ & $0.00 \pm 0.00$ \\
ASP & $16850.86 \pm 1047.58$ & $1187.87 \pm 89.26$ & $59385.92 \pm 2117.41$ & $302.13 \pm 23.19$ & $538.25 \pm 39.60$ & $2315.21 \pm 185.23$ \\
MIP & $17073.24 \pm 1046.26$ & $0.00 \pm 0.00$ & $59117.64 \pm 2121.67$ & $0.00 \pm 0.00$ & $9.80 \pm 1.79$ & $0.00 \pm 0.00$ \\
Initial & $17078.59 \pm 1046.21$ & $0.00 \pm 0.00$ & $59112.50 \pm 2121.75$ & $0.00 \pm 0.00$ & $0.00 \pm 0.00$ & $0.00 \pm 0.00$ \\
\bottomrule
\end{tabular}
}
\caption{
Results aggregated by variant over the large scenarios with SEM.
Besides overloads, we show all lexicographic soft constraint values.
}
\label{tab:summary-by-variant-new}
\end{table}
\begin{figure}
    \centering
    \includegraphics[width=10cm]{imgs/11_pcap_scaling/pcap_scaling_faceted_part1.pdf}
    \caption{
    Nominal capacity scaling results on heuristic variants for grid graphs.
    }
    \label{fig:nominal-grid-graphs}
\end{figure}


\begin{table}[t]
\centering
\scriptsize
\resizebox{\textwidth}{!}{%
\begin{tabular}{lrrrrrrrr}
\toprule
Nominal Capacity & Overload [\#] & Avg. $\Delta$/step [\#] & Avg. $\Delta$/s [\#/s] & Arrival Delay [h] & Sector-Number [\#] & Sector-Diff [\#] & Reroute [\#] & Reconfig [\#] \\
\midrule
PCAP010 & $58664.8 \pm 1906.8$ & $28.73 \pm 4.47$ & $19.73 \pm 0.75$ & $76068.1 \pm 2861.7$ & $125976.6 \pm 4309.6$ & $468.0 \pm 24.8$ & $4421.2 \pm 147.2$ & $66323.2 \pm 3700.9$ \\
PCAP020 & $26061.7 \pm 944.4$ & $75.31 \pm 7.21$ & $81.11 \pm 4.09$ & $47543.7 \pm 2104.6$ & $113153.1 \pm 3946.0$ & $335.2 \pm 16.7$ & $3974.4 \pm 138.0$ & $35999.3 \pm 1978.1$ \\
PCAP030 & $12846.2 \pm 512.3$ & $196.72 \pm 17.33$ & $104.39 \pm 5.26$ & $33518.3 \pm 1613.7$ & $98342.0 \pm 3418.3$ & $239.1 \pm 12.0$ & $3558.3 \pm 128.7$ & $18128.1 \pm 971.9$ \\
PCAP040 & $6153.5 \pm 275.6$ & $225.25 \pm 19.29$ & $87.14 \pm 4.40$ & $22652.7 \pm 1215.1$ & $87035.7 \pm 2966.9$ & $168.3 \pm 8.5$ & $3136.5 \pm 120.9$ & $9541.5 \pm 456.9$ \\
PCAP050 & $2811.9 \pm 146.7$ & $197.36 \pm 16.33$ & $59.35 \pm 2.99$ & $13892.7 \pm 882.9$ & $80489.2 \pm 2727.7$ & $116.1 \pm 6.9$ & $2688.7 \pm 107.1$ & $5235.4 \pm 234.1$ \\
PCAP060 & $1236.2 \pm 75.0$ & $150.56 \pm 13.19$ & $38.88 \pm 1.96$ & $8407.2 \pm 636.7$ & $75189.7 \pm 2509.4$ & $85.2 \pm 7.9$ & $2070.5 \pm 89.8$ & $2899.7 \pm 124.4$ \\
PCAP070 & $465.1 \pm 33.5$ & $113.20 \pm 10.70$ & $22.12 \pm 1.08$ & $4375.9 \pm 379.2$ & $71292.1 \pm 2356.2$ & $62.5 \pm 7.5$ & $1505.7 \pm 67.4$ & $1652.7 \pm 77.5$ \\
PCAP080 & $143.6 \pm 10.9$ & $50.97 \pm 4.01$ & $9.77 \pm 0.41$ & $1182.4 \pm 94.6$ & $68711.7 \pm 2265.2$ & $53.0 \pm 7.5$ & $1118.5 \pm 55.0$ & $1123.7 \pm 62.2$ \\
PCAP090 & $34.5 \pm 2.5$ & $14.31 \pm 0.83$ & $3.31 \pm 0.11$ & $250.0 \pm 26.5$ & $66360.3 \pm 2188.3$ & $42.2 \pm 7.0$ & $693.5 \pm 39.0$ & $722.3 \pm 53.5$ \\
PCAP100 & $0.0 \pm 0.0$ & -- & -- & $97.6 \pm 24.5$ & $59137.5 \pm 1936.4$ & $32.2 \pm 7.8$ & $45.5 \pm 10.9$ & $199.6 \pm 51.8$ \\
\bottomrule
\end{tabular}
}
\caption{Results aggregated by nominal capacity}
\label{tab:summary-by-nominal-capacity}
\end{table}

\begin{table}[t]
\centering
\scriptsize
\resizebox{\textwidth}{!}{%
\begin{tabular}{lrrrrrrrr}
\toprule
Instance Size & Overload [\#] & Avg. $\Delta$/step [\#] & Avg. $\Delta$/s [\#/s] & Arrival Delay [h] & Sector-Number [\#] & Sector-Diff [\#] & Reroute [\#] & Reconfig [\#] \\
\midrule
1000 & $622.04 \pm 36.47$ & $11.28 \pm 0.59$ & $17.89 \pm 0.85$ & $9816.69 \pm 598.18$ & $84831.62 \pm 2520.56$ & $236.71 \pm 14.29$ & $334.91 \pm 6.86$ & $17018.73 \pm 1411.63$ \\
1778 & $1182.72 \pm 64.76$ & $25.61 \pm 2.72$ & $29.76 \pm 1.53$ & $13996.28 \pm 853.58$ & $85562.05 \pm 2537.53$ & $217.76 \pm 12.82$ & $563.99 \pm 11.97$ & $16998.16 \pm 1416.77$ \\
3162 & $2410.82 \pm 121.58$ & $38.84 \pm 2.59$ & $46.28 \pm 2.50$ & $19398.08 \pm 1144.41$ & $84990.31 \pm 2520.13$ & $155.27 \pm 9.55$ & $946.32 \pm 20.71$ & $15270.74 \pm 1280.41$ \\
5623 & $4923.73 \pm 229.35$ & $65.35 \pm 4.03$ & $66.23 \pm 3.69$ & $24974.66 \pm 1454.29$ & $84905.77 \pm 2524.25$ & $152.85 \pm 9.76$ & $1575.30 \pm 35.61$ & $15541.77 \pm 1302.17$ \\
10000 & $9662.92 \pm 412.66$ & $119.48 \pm 7.61$ & $70.24 \pm 3.75$ & $28644.44 \pm 1562.53$ & $84104.16 \pm 2485.46$ & $135.99 \pm 8.85$ & $2639.22 \pm 61.32$ & $13648.29 \pm 1163.46$ \\
17782 & $19239.11 \pm 758.16$ & $203.02 \pm 12.94$ & $58.41 \pm 3.00$ & $28231.74 \pm 1474.03$ & $83654.16 \pm 2472.67$ & $122.83 \pm 8.28$ & $4086.57 \pm 101.62$ & $12050.61 \pm 1031.62$ \\
31622 & $37850.79 \pm 1401.39$ & $354.95 \pm 22.75$ & $42.35 \pm 2.04$ & $20530.23 \pm 989.14$ & $83933.42 \pm 2476.91$ & $99.94 \pm 6.69$ & $6102.58 \pm 169.38$ & $8749.45 \pm 685.72$ \\
\bottomrule
\end{tabular}
}
\caption{Results aggregated by instance size (across all scenarios and options)}
\label{tab:summary-by-size}
\end{table}

\begin{table}[t]
\centering
\scriptsize
\resizebox{\textwidth}{!}{%
\begin{tabular}{lrrrrrrrr}
\toprule
Scenario & Overload [\#] & Avg. $\Delta$/step [\#] & Avg. $\Delta$/s [\#/s] & Arrival Delay [h] & Sector-Number [\#] & Sector-Diff [\#] & Reroute [\#] & Reconfig [\#] \\
\midrule
CE-7x7 & $3439.66 \pm 250.15$ & $42.13 \pm 4.39$ & $27.98 \pm 1.33$ & $38921.81 \pm 2165.26$ & $2129.42 \pm 25.39$ & $47.27 \pm 4.17$ & $3344.47 \pm 118.19$ & $850.52 \pm 26.31$ \\
USA-20x10 & $6058.08 \pm 358.06$ & $289.41 \pm 16.60$ & $105.03 \pm 4.52$ & $36909.71 \pm 1839.85$ & $2996.98 \pm 36.20$ & $78.91 \pm 8.46$ & $3167.77 \pm 114.51$ & $1712.30 \pm 61.52$ \\
EUR-40x20 & $9925.59 \pm 566.90$ & $466.93 \pm 25.39$ & $129.01 \pm 5.18$ & $36051.96 \pm 1679.41$ & $4921.12 \pm 57.56$ & $179.10 \pm 13.08$ & $2992.11 \pm 108.44$ & $8164.45 \pm 343.52$ \\
EA-40x40 & $14532.95 \pm 773.86$ & $9.05 \pm 0.87$ & $7.27 \pm 0.16$ & $23243.94 \pm 978.59$ & $10452.80 \pm 92.42$ & $30.37 \pm 0.99$ & $2672.54 \pm 93.02$ & $3522.50 \pm 160.06$ \\
CE & $11522.71 \pm 720.33$ & $13.97 \pm 1.09$ & $15.66 \pm 0.37$ & $11973.20 \pm 634.92$ & $8231.95 \pm 73.63$ & $36.70 \pm 1.33$ & $1671.47 \pm 57.46$ & $3207.36 \pm 160.67$ \\
DACH & $12101.47 \pm 757.77$ & $83.61 \pm 4.62$ & $76.77 \pm 3.01$ & $13136.04 \pm 644.07$ & $54483.17 \pm 458.80$ & $188.33 \pm 6.57$ & $2148.88 \pm 89.23$ & $12238.02 \pm 589.19$ \\
EUR & $17593.56 \pm 1113.67$ & $12.21 \pm 0.64$ & $11.14 \pm 0.46$ & $4269.18 \pm 165.51$ & $293145.55 \pm 2114.42$ & $457.11 \pm 19.83$ & $1714.96 \pm 86.80$ & $50161.32 \pm 2598.47$ \\
USA & $11559.82 \pm 796.91$ & $18.16 \pm 0.91$ & $5.61 \pm 0.19$ & $1885.14 \pm 72.23$ & $300189.29 \pm 2063.68$ & $263.76 \pm 14.93$ & $857.97 \pm 46.81$ & $33603.80 \pm 2280.28$ \\
\bottomrule
\end{tabular}
}
\caption{Results aggregated by scenario (across all instances and options)}
\label{tab:summary-by-scenario}
\end{table}



  \begin{figure}[htb]
      \centering
      \begin{subfigure}[t]{0.45\textwidth}
          \includegraphics[height=5cm]{imgs/04_initial_sectors_appendix/gabriel_00-0-CENTRAL-EUROPE-7x7-2019-06-01--2019-06-30-CAP-ENROUTE-1200-CLUSTERSIZE-1.pdf}
        \end{subfigure}
        \begin{subfigure}[t]{0.45\textwidth}
          \includegraphics[height=5cm]{imgs/05_initial_overload_appendix/00-0-CENTRAL-EUROPE-7x7-2019-06-01--2019-06-30-CAP-ENROUTE-1200-CLUSTERSIZE-1.pdf}
        \end{subfigure}
        \begin{subfigure}[t]{0.08\textwidth}
          \includegraphics[height=5cm]{imgs/05_initial_overload_appendix/00-0-CENTRAL-EUROPE-7x7-2019-06-01--2019-06-30-CAP-ENROUTE-1200-CLUSTERSIZE-1_legends.pdf}
        \end{subfigure}

        \begin{subfigure}[t]{0.4\textwidth}
            \includegraphics[width=5cm]{imgs/06_flight_counts/20260617_00.pdf}            
        \end{subfigure}
        \begin{subfigure}[t]{0.59\textwidth}
            \includegraphics[width=8cm]{imgs/07_airport_counts/00-0-CENTRAL-EUROPE-7x7-2019-06-01--2019-06-30-CAP-ENROUTE-1200-CLUSTERSIZE-1__DATA_S31622_42__departure_airports.pdf} 
        \end{subfigure}
        
        \caption{
        Left top: Initial navpoint-sector figure.
        Right top: Initial overload and trajectory figure.
        Left bottom: Filed flights departure times.
        Right bottom: Histogram of departure airports.
        Instance: 31622 flights, CE-G.
        }
      \label{fig:schematic-general-graph}
      \vspace{-0.5cm}
      \centering
      \begin{subfigure}[t]{0.45\textwidth}
          \includegraphics[height=5cm]{imgs/04_initial_sectors_appendix/gabriel_01-0-USA-EAST-COAST-20x10-2019-06-01--2019-06-30-CAP-ENROUTE-1200-CLUSTERSIZE-2.pdf}
        \end{subfigure}
        \begin{subfigure}[t]{0.45\textwidth}
          \includegraphics[height=5cm]{imgs/05_initial_overload_appendix/01-0-USA-EAST-COAST-20x10-2019-06-01--2019-06-30-CAP-ENROUTE-1200-CLUSTERSIZE-2.pdf}
        \end{subfigure}
        \begin{subfigure}[t]{0.08\textwidth}
          \includegraphics[height=5cm]{imgs/05_initial_overload_appendix/01-0-USA-EAST-COAST-20x10-2019-06-01--2019-06-30-CAP-ENROUTE-1200-CLUSTERSIZE-2_legends.pdf}
        \end{subfigure}

        \begin{subfigure}[t]{0.4\textwidth}
            \includegraphics[width=5cm]{imgs/06_flight_counts/20260617_01.pdf}            
        \end{subfigure}
        \begin{subfigure}[t]{0.59\textwidth}
            \includegraphics[width=8cm]{imgs/07_airport_counts/01-0-USA-EAST-COAST-20x10-2019-06-01--2019-06-30-CAP-ENROUTE-1200-CLUSTERSIZE-2__DATA_S31622_42__departure_airports.pdf} 
        \end{subfigure}
        
        \caption{
        Left top: Initial navpoint-sector figure.
        Right top: Initial overload and trajectory figure.
        Left bottom: Filed flights departure times.
        Right bottom: Histogram of departure airports.
        Instance: 31622 flights, USA-G.
        }
      \label{fig:schematic-general-graph}
      \vspace{-0.5cm}
  \end{figure}

  \begin{figure}[t]
      \centering
      \begin{subfigure}[t]{0.45\textwidth}
          \includegraphics[height=5cm]{imgs/04_initial_sectors_appendix/gabriel_02-0-MAJOR-EUROPE-40x20-2019-06-01--2019-06-30-CAP-ENROUTE-1200-CLUSTERSIZE-8.pdf}
        \end{subfigure}
        \begin{subfigure}[t]{0.45\textwidth}
          \includegraphics[height=5cm]{imgs/05_initial_overload_appendix/02-0-MAJOR-EUROPE-40x20-2019-06-01--2019-06-30-CAP-ENROUTE-1200-CLUSTERSIZE-8.pdf}
        \end{subfigure}
        \begin{subfigure}[t]{0.08\textwidth}
          \includegraphics[height=5cm]{imgs/05_initial_overload_appendix/02-0-MAJOR-EUROPE-40x20-2019-06-01--2019-06-30-CAP-ENROUTE-1200-CLUSTERSIZE-8_legends.pdf}
        \end{subfigure}

        \begin{subfigure}[t]{0.4\textwidth}
            \includegraphics[width=5cm]{imgs/06_flight_counts/20260617_02.pdf}            
        \end{subfigure}
        \begin{subfigure}[t]{0.59\textwidth}
            \includegraphics[width=8cm]{imgs/07_airport_counts/02-0-MAJOR-EUROPE-40x20-2019-06-01--2019-06-30-CAP-ENROUTE-1200-CLUSTERSIZE-8__DATA_S31622_42__departure_airports.pdf} 
        \end{subfigure}
        
        \caption{
        Left top: Initial navpoint-sector figure.
        Right top: Initial overload and trajectory figure.
        Left bottom: Filed flights departure times.
        Right bottom: Histogram of departure airports.
        Instance: 31622 flights, EUR-G.
        }
      \label{fig:schematic-general-graph}
      \vspace{-0.5cm}
  \end{figure}

  \begin{figure}[t]
      \centering
      \begin{subfigure}[t]{0.45\textwidth}
          \includegraphics[height=5cm]{imgs/04_initial_sectors_appendix/gabriel_03-0-EAST-ASIA-40x40-2019-06-01--2019-06-30-CAP-ENROUTE-1200-CLUSTERSIZE-16.pdf}
        \end{subfigure}
        \begin{subfigure}[t]{0.45\textwidth}
          \includegraphics[height=5cm]{imgs/05_initial_overload_appendix/03-0-EAST-ASIA-40x40-2019-06-01--2019-06-30-CAP-ENROUTE-1200-CLUSTERSIZE-16.pdf}
        \end{subfigure}
        \begin{subfigure}[t]{0.08\textwidth}
          \includegraphics[height=5cm]{imgs/05_initial_overload_appendix/03-0-EAST-ASIA-40x40-2019-06-01--2019-06-30-CAP-ENROUTE-1200-CLUSTERSIZE-16_legends.pdf}
        \end{subfigure}

        \begin{subfigure}[t]{0.4\textwidth}
            \includegraphics[width=5cm]{imgs/06_flight_counts/20260617_03.pdf}            
        \end{subfigure}
        \begin{subfigure}[t]{0.59\textwidth}
            \includegraphics[width=8cm]{imgs/07_airport_counts/03-0-EAST-ASIA-40x40-2019-06-01--2019-06-30-CAP-ENROUTE-1200-CLUSTERSIZE-16__DATA_S31622_42__departure_airports.pdf} 
        \end{subfigure}
        
        \caption{
        Left top: Initial navpoint-sector figure.
        Right top: Initial overload and trajectory figure.
        Left bottom: Filed flights departure times.
        Right bottom: Histogram of departure airports.
        Instance: 31622 flights, EA-G.
        }
      \label{fig:schematic-general-graph}
      \vspace{-0.5cm}
  \end{figure}

  \begin{figure}[t]
      \centering
      \begin{subfigure}[t]{0.45\textwidth}
          \includegraphics[height=5cm]{imgs/04_initial_sectors_appendix/gabriel_04-0-DACH-2019-06-01--2019-06-30-CAP-ENROUTE-1200-CLUSTERSIZE-50-GABRIEL-GRAPH.pdf}
        \end{subfigure}
        \begin{subfigure}[t]{0.45\textwidth}
          \includegraphics[height=5cm]{imgs/05_initial_overload_appendix/04-0-DACH-2019-06-01--2019-06-30-CAP-ENROUTE-1200-CLUSTERSIZE-50-GABRIEL-GRAPH.pdf}
        \end{subfigure}
        \begin{subfigure}[t]{0.08\textwidth}
          \includegraphics[height=5cm]{imgs/05_initial_overload_appendix/04-0-DACH-2019-06-01--2019-06-30-CAP-ENROUTE-1200-CLUSTERSIZE-50-GABRIEL-GRAPH_legends.pdf}
        \end{subfigure}

        \begin{subfigure}[t]{0.4\textwidth}
            \includegraphics[width=5cm]{imgs/06_flight_counts/20260617_04.pdf}            
        \end{subfigure}
        \begin{subfigure}[t]{0.59\textwidth}
            \includegraphics[width=6cm]{imgs/07_airport_counts/04-0-DACH-2019-06-01--2019-06-30-CAP-ENROUTE-1200-CLUSTERSIZE-50-GABRIEL-GRAPH__DATA_S31622_42__departure_airports.pdf} 
        \end{subfigure}
        
        \caption{
        Left top: Initial navpoint-sector figure.
        Right top: Initial overload and trajectory figure.
        Left bottom: Filed flights departure times.
        Right bottom: Histogram of departure airports.
        Instance: 31622 flights, DACH.
        }
      \label{fig:schematic-general-graph}
      \vspace{-0.5cm}
  \end{figure}

  \begin{figure}[t]
      \centering
      \begin{subfigure}[t]{0.45\textwidth}
          \includegraphics[height=5cm]{imgs/04_initial_sectors_appendix/gabriel_05-0-EUROPE-2019-06-01--2019-06-30-CAP-ENROUTE-1200-CLUSTERSIZE-200-GABRIEL-GRAPH_compressed.pdf}
        \end{subfigure}
        \begin{subfigure}[t]{0.45\textwidth}
          \includegraphics[height=5cm]{imgs/05_initial_overload_appendix/05-0-EUROPE-2019-06-01--2019-06-30-CAP-ENROUTE-1200-CLUSTERSIZE-200-GABRIEL-GRAPH_compressed.pdf}
        \end{subfigure}
        \begin{subfigure}[t]{0.08\textwidth}
          \includegraphics[height=5cm]{imgs/05_initial_overload_appendix/05-0-EUROPE-2019-06-01--2019-06-30-CAP-ENROUTE-1200-CLUSTERSIZE-200-GABRIEL-GRAPH_legends.pdf}
        \end{subfigure}

        \begin{subfigure}[t]{0.4\textwidth}
            \includegraphics[width=5cm]{imgs/06_flight_counts/20260617_05.pdf}            
        \end{subfigure}
        \begin{subfigure}[t]{0.59\textwidth}
            \includegraphics[width=6cm]{imgs/07_airport_counts/05-0-EUROPE-2019-06-01--2019-06-30-CAP-ENROUTE-1200-CLUSTERSIZE-200-GABRIEL-GRAPH__DATA_S31622_42__departure_airports.pdf} 
        \end{subfigure}
        
        \caption{
        Left top: Initial navpoint-sector figure.
        Right top: Initial overload and trajectory figure.
        Left bottom: Filed flights departure times.
        Right bottom: Histogram of departure airports.
        Instance: 31622 flights, EUR.
        }
      \label{fig:schematic-general-graph}
      \vspace{-0.5cm}
  \end{figure}

  \begin{figure}[t]
      \centering
      \begin{subfigure}[t]{0.45\textwidth}
          \includegraphics[width=6cm]{imgs/04_initial_sectors_appendix/gabriel_06-0-USA-MAINLAND-2019-06-01--2019-06-30-CAP-ENROUTE-1200-CLUSTERSIZE-800-GABRIEL-GRAPH_compressed.pdf}
        \end{subfigure}
        \begin{subfigure}[t]{0.45\textwidth}
          \includegraphics[width=6cm]{imgs/05_initial_overload_appendix/06-0-USA-MAINLAND-2019-06-01--2019-06-30-CAP-ENROUTE-1200-CLUSTERSIZE-800-GABRIEL-GRAPH_compressed.pdf}
        \end{subfigure}
        \begin{subfigure}[t]{0.08\textwidth}
          \includegraphics[height=3cm]{imgs/05_initial_overload_appendix/06-0-USA-MAINLAND-2019-06-01--2019-06-30-CAP-ENROUTE-1200-CLUSTERSIZE-800-GABRIEL-GRAPH_legends.pdf}
        \end{subfigure}

        \begin{subfigure}[t]{0.4\textwidth}
            \includegraphics[width=5cm]{imgs/06_flight_counts/20260617_06.pdf}            
        \end{subfigure}
        \begin{subfigure}[t]{0.59\textwidth}
            \includegraphics[width=6cm]{imgs/07_airport_counts/06-0-USA-MAINLAND-2019-06-01--2019-06-30-CAP-ENROUTE-1200-CLUSTERSIZE-800-GABRIEL-GRAPH__DATA_S31622_42__departure_airports.pdf} 
        \end{subfigure}
        
        \caption{
        Left top: Initial navpoint-sector figure.
        Right top: Initial overload and trajectory figure.
        Left bottom: Filed flights departure times.
        Right bottom: Histogram of departure airports.
        Instance: 31622 flights, USA.
        }
      \label{fig:schematic-general-graph}
      \vspace{-0.5cm}
  \end{figure}
  


%
\end{document}